\documentclass{article}

 \usepackage[preprint]{neurips_2026}

\usepackage[utf8]{inputenc} 
\usepackage[T1]{fontenc}    
\usepackage{hyperref}       
\usepackage{url}            
\usepackage{booktabs}       
\usepackage{amsfonts}       
\usepackage{nicefrac}       
\usepackage{microtype}      
\usepackage{xcolor}         

\usepackage{amssymb}            
\usepackage{mathtools}          
\usepackage{mathrsfs}           
\mathtoolsset{showonlyrefs}     
\usepackage{graphicx}           
\usepackage{adjustbox}
\usepackage{subcaption}         
\usepackage[space]{grffile}     
\usepackage{url}                
\usepackage{lipsum}             
\usepackage{amsthm}          
\usepackage{thmtools}
\usepackage{thm-restate}

\declaretheorem[name=Definition, numberwithin=section]{definition}
\declaretheorem[name=Assumption, numberwithin=section]{assumption}

\usepackage{drafttools}           
\usepackage{amsmath, amsfonts, amssymb, amsthm}
\usepackage{mathtools} 
\usepackage{bm}        

\DeclareMathOperator*{\argmax}{argmax}

\usepackage{acronym}

\newacro{DM}{decision making}
\newacro{AD}{automated driving}
\newacro{SOC}{discrete-time stochastic optimal control}

\newacro{NLP}{nonlinear program}
\newacro{ODE}{ordinary differential equation}
\newacro{MPC}{model predictive control}
\newacro{MHE}{moving horizon estimation}
\newacro{NMPC}{nonlinear model predictive control}
\newacro{ENMPC}{economic nonlinear model predictive control}
\newacro{MPPI}{model predictive path integral control}
\newacro{SPC}{subspace predictive control}
\newacro{QP}{quadratic program}
\newacro{MIQP}{mixed-integer {\ac{QP}}}
\newacro{MILP}{mixed-integer \ac{LP}}
\newacro{MINLP}{mixed-integer \ac{NLP}}
\newacro{MI}{mixed-integer}
\newacro{MIP}{mixed-integer programming}
\newacro{MPCC}{mathematical program with complementarity constraints}
\acrodefplural{MPCC}{mathematical programs with complementarity constraints}
\newacro{BB}{branch-and-bound}
\newacro{SQP}{sequential quadratic programming}
\newacro{RNN}{recurrent neural network}
\newacro{OCP}{optimal control problem}
\newacro{LQR}{linear quadratic regulator}
\newacro{iLQR}{iterative linear quadratic regulator}
\newacro{SLQ}{sequential linear quadratic programming}
\newacro{DDP}{differential dynamic programming}
\newacro{MCP}{mixed complementarity problem}
\newacro{IP}{interior point}
\newacro{ADMM}{alternating direction method of multipliers}
\newacro{RTI}{real time iteration}
\newacro{POMDP}{partially observable \ac{MDP}}
\newacro{CEM}{cross-entropy method}
\newacro{GGN}{generalized Gauss-Newton}
\newacro{KKT}{Karush-Kuhn-Tucker}
\newacro{IFT}{implicit function theorem}

\newacro{RL}{reinforcement learning}
\newacro{DP}{dynamic programming}
\newacro{LSTM}{long short-term memory}
\newacro{NN}{artificial neural network}
\newacro{PPO}{proximal policy optimization}
\newacro{PG}{policy gradient}
\newacro{DPG}{deterministic \ac{PG}}
\newacro{SPG}{stochastic \ac{PG}}
\newacro{TD}{temporal difference}
\newacro{SAC}{soft actor-critic}
\newacro{PI}{policy iteration}
\newacro{SARSA}{state action reward state action}
\newacro{IL}{imitation learning}
\newacro{MLE}{maximum likelihood estimation}
\newacro{MDP}{Markov decision process}
\newacroplural{MDP}[MDPs]{Markov decision processes}
\newacro{DDPG}{deep deterministic \ac{PG}}
\newacro{DQN}{deep Q-networks}
\newacro{TD3}{twin-delayed actor-critic}
\newacro{VI}{value iteration}
\newacro{TRPO}{trust region policy optimization}
\newacro{BC}{behavior cloning}
\newacro{GAN}{generative adversarial network}
\newacro{GPS}{guided policy search}
\newacro{GP}{Gaussian process}
\newacro{leap-c}{Learning for Predictive Control}
\newacro{FA}{function approximator}
\newacro{ADP}{approximate \ac{DP}}
\newacro{LMPC}{linear \ac{MPC}}

\newacro{FB}{forward-backward}
\newacro{SR}{successor representation}
\newacro{SF}{successor features}
\newacro{SVD}{singular value decomposition}

\newacro{FF}{feed forward network}
\newacro{DNN}{deep neural network}
\newacro{OMD}{optimal model design}

\newacro{UGV}{unmanned ground vehicle}

\newacro{RWE}{real-world experiments}

\title{Spectral Alignment in Forward–Backward Representations via Temporal Abstraction}

\author{%
  Seyed Mahdi B. Azad \quad Jasper Hoffmann \quad Iman Nematollahi \quad Hao Zhu\\
  \And
  Abhinav Valada \quad Joschka B\"{o}decker \\
  \\
  Department of Computer Science, University of Freiburg, Germany  \\
  \texttt{{basiri,hoffmaja,nematoli,zhuh,valada,jboedeck}@cs.uni-freiburg.de} \\
}

\begin{document}

\maketitle

\begin{abstract}
    \Acf{FB} representations provide a powerful framework for learning the \acf{SR} in continuous spaces by enforcing a low-rank factorization. 
    However, a fundamental spectral mismatch often exists between the high-rank transition dynamics of continuous environments and the low-rank bottleneck of the \ac{FB} architecture, making accurate low-rank representation learning difficult.
    In this work, we analyze temporal abstraction as a mechanism to mitigate this mismatch.
    By characterizing the spectral properties of the transition operator, we show that temporal abstraction acts analogously to a low-pass filter that suppresses high-frequency spectral components. 
    This suppression reduces the effective rank of the induced SR while preserving a formal bound on the resulting value function error.
    Empirically, we show that this alignment is a key factor for stable \ac{FB} learning, particularly at high discount factors where bootstrapping becomes error-prone. 
    Our results identify temporal abstraction as a principled mechanism for shaping the spectral structure of the underlying \acs{MDP} and enabling effective long-horizon representations in continuous control.
\end{abstract}

\begin{figure}[t]
    \begin{center}
        \includegraphics[scale=0.7]{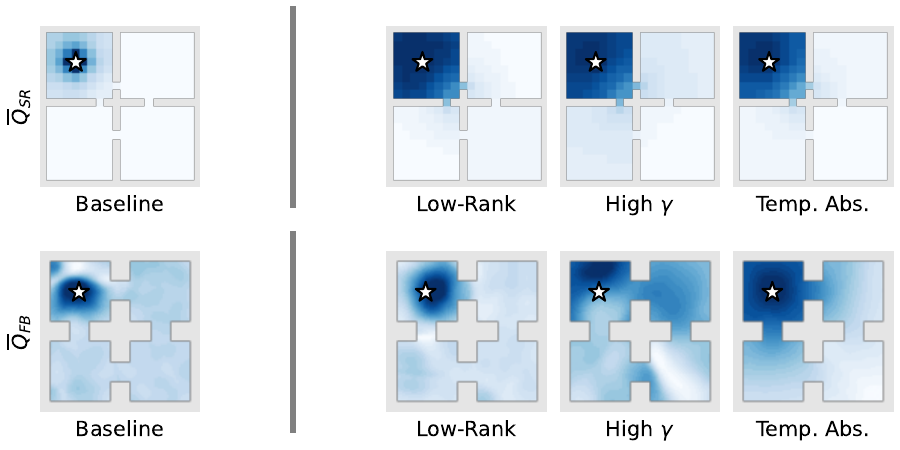}
    \end{center}
    \caption{
    \textbf{Q-function via Successor Representation (SR).} 
    The \ac{SR} enables rapid value inference for arbitrary goals (e.g., \textit{star marker}). Low-rank structure in \ac{SR} is desirable for navigation, as it preserves topological features (e.g., rooms) while suppressing transient dynamics.
    \textbf{Top:} In discrete MDPs, the SR can be computed from the transition matrix.
    \textbf{Bottom:} In continuous domains, \acf{FB} learning approximates the SR, where the embedding dimension controls the rank of the approximation.
    Low-rank structure can arise through (1) explicit constraints (\textit{Low-Rank} column; e.g., SVD or small embeddings), (2) long horizons (\textit{High $\gamma$} column), or (3) temporal abstraction (\textit{Temp.~Abs.} column; e.g., action repetition).
    In continuous settings, temporal abstraction provides the spectral alignment needed for effective bootstrapping, whereas high $\gamma$ or overly restrictive bottlenecks can impair representation learning, leading to Q-functions with many erroneous local maxima. 
    }
    \label{fig:cover}
\end{figure}

\section{Introduction}
\label{sec:introduction}

Effective long-horizon control requires representations that map current actions to future outcomes. The \acf{SR} achieves this by encoding discounted future state--action occupancies \citep{dayan1993improvinggeneralization}, providing a structured foundation for value computation across diverse rewards. While SR-based methods have successfully scaled to high-dimensional control \citep{kulkarni2016DeepSR, zhang2017deeprlwithsuccessor}, continuous domains require representations that are both expressive and computationally tractable. \Acf{FB} representations address this by learning a low-rank factorization of the SR directly from interaction \citep{blier2021learning, touati2021learningone}. 

However, a fundamental incompatibility exists: while FB assumes a low-rank constraint, the true SR in continuous environments is often high-rank with slow spectral decay \citep{dubail2025shift}. We identify this spectral mismatch as a primary bottleneck for FB. Empirically, we show that increasing network capacity does not reliably improve performance; instead, higher capacity can lead to performance degradation as networks attempt to resolve high-frequency dynamical components that are inherently difficult to predict. When coupled with bootstrapping, errors in these spectral components propagate through Bellman updates and destabilize the learning process.

To address this, we leverage temporal abstraction via action repetition to regulate the SR's spectral structure. We demonstrate that multi-step transitions accelerate spectral decay, yielding a more structured, low-rank target for the FB objective. As shown in Figure~\ref{fig:cover}, action repetition—previously utilized for exploration and efficiency \citep{mnih2015humanlevel, biedenkapp2021temporl}—consistently improves both representation quality and episodic return across discrete and continuous environments.

Finally, we examine the influence of the discount factor, $\gamma$. While increasing $\gamma$ extends the task horizon, it also degrades SR conditioning and amplifies sub-dominant spectral components, increasing sensitivity to noise. We show that temporal abstraction counteracts this by improving spectral concentration, enabling stable learning at high effective horizons. Together, our results provide a unified perspective on the spectral requirements of FB learning, shifting the burden of representation from the function approximator to the design of interaction dynamics.
\section{Related Works}
\label{sec:related-works}

\Aclp{SR} were introduced as task-agnostic predictive representations that enable rapid adaptation to new reward functions \citep{dayan1993improvinggeneralization}. 
Subsequent work has leveraged SR for transfer and zero-shot reinforcement learning \citep{barreto2017SRfortransfer}. 
However, exact computation scales poorly with state dimensionality, motivating low-rank and parametric approximations that capture dominant long-horizon dynamics.

\Acl{FB} representation learning methods \citep{blier2021learning, touati2021learningone, touati2023doeszeroshotrl} address this challenge by learning factorizations of the SR that emphasize shared future occupancies over fine-grained state distinctions. 
While effective, these approaches implicitly rely on a low-rank SR structure and offer limited theoretical insight into when such a structure arises. 
Our work complements FB by linking the effective rank of the SR to the spectral properties of the transition dynamics induced by the policy and environment, and by proposing mechanisms that promote this low-rank structure.

The transition operator of a Markov decision process is central to long-term behavior, mixing, and value estimation. 
Classical Markov chain theory relates the spectral gap of the transition matrix to convergence rates \citep{meyn2012markov}. 
In reinforcement learning, spectral methods have informed representation learning and planning, including proto-value functions and Laplacian-based abstractions \citep{mahadevan2005proto, machado2017Alaplacian, machado2017EigenoptionDT, shehmar2026laplacian}. 
However, prior work focuses on policy evaluation and transfer, without analyzing how transition spectra influence the rank, compressibility, or learnability of low-rank SR under function approximation.

Temporal abstraction has been widely studied through semi-Markov decision processes and options \citep{sutton1999between}. 
A simple instance is action repetition (frame skipping), used in Atari benchmarks \citep{mnih2015humanlevel} and known to significantly affect learning \citep{machado2018revisiting, biedenkapp2021temporl}. 
From an operator perspective, repeating actions replaces the one-step transition matrix with its $k$-step counterpart, smoothing the dynamics. 
Existing work primarily motivates this via efficiency or exploration, without examining its impact on spectral structure or low-rank predictive representations such as the SR.

Although the connection between multi-step transitions and SR spectra is established \citep{dayan1993improvinggeneralization, machado2017EigenoptionDT, machado2017Alaplacian, dubail2025shift}, and FB methods are empirically successful \citep{touati2021learningone, touati2023doeszeroshotrl}, their interaction remains underexplored. 
We reinterpret temporal abstraction not as an exploration heuristic \citep{lakshminarayanan2017dynamicaction}, but as a spectral alignment mechanism that bridges high-rank dynamics and the low-rank inductive bias of FB representations.
\section{Background}
\label{sec:background}

We represent a finite, reward-free \ac{MDP} as a tuple $\mathcal{M} = (\mathcal{S}, \mathcal{A}, P, \gamma)$, where $\mathcal{S}$ and $\mathcal{A}$ represent the state and action spaces, respectively, $P(s' \mid s, a)$ is the transition probability from state $s$ to $s'$ given action $a$, and $\gamma \in (0, 1)$ is the discount factor \citep{sutton1998introduction}. 
Given a policy $\pi$, the policy-induced transition operator is defined as a matrix $P^{\pi} \in \mathbb{R}^{|\mathcal{S} \times \mathcal{A}| \times |\mathcal{S} \times \mathcal{A}|}$, where $P^{\pi}(s', a' \mid s, a) = \mathbb{P}(s_{t+1}=s', a_{t+1}=a' \mid s_t=s, a_t=a, \pi)$. The matrix $P^{\pi}$ is row-stochastic, i.e., $P^{\pi} \mathbf{1} = \mathbf{1}$. 
The (discounted) \ac{SR} associated with $P^{\pi}$ is defined as $M^{\pi} = (I - \gamma P^{\pi})^{-1} = \sum_{t=0}^{\infty} \gamma^t (P^{\pi})^t$. 
In the following, we use the matrix $M^\pi$ and its functional form interchangeably. We define $M^\pi(s, a, s', a')$ as the expected discounted occupancy of $(s', a')$ given an initial state-action pair $(s, a)$. In matrix notation, it corresponds to the entry of $M^\pi$ indexed by row $(s, a)$ and column $(s', a')$.

\subsection{Forward-Backward Representation}
The \ac{FB} representation is a parametric framework designed to approximate the \ac{SR} for all optimal policies in an unsupervised way  \citep{touati2021learningone}. Let $(\pi_z)_{z\in \mathbb{R}^d}$ be a family of policies parameterized by $z \in \mathbb{R}^d$, and define the embedding functions $F\colon \mathcal{S} \times \mathcal{A} \times \mathbb{R}^d \to \mathbb{R}^d$ and $B\colon \mathcal{S} \times \mathcal{A} \to \mathbb{R}^d$. Learning an \ac{FB} representation entails finding $(F, B, \pi_z)$ such that:
\begin{equation}\label{eq:fb_criterion}
\pi_z(s) \in \argmax_a F(s, a, z)^\top z \quad \text{and} \quad F(s, a, z)^\top B(s', a') = M^{\pi_z}(s, a, s', a')
\end{equation}
for all $(s, a), (s', a') \in \mathcal{S} \times \mathcal{A}$ and $z\in\mathbb{R}^d$.
In continuous action spaces, the $\argmax$ in Eq.~\eqref{eq:fb_criterion} is intractable. Following standard practice~\citep{touati2021learningone}, we introduce a learned actor $\pi_\theta\colon \mathcal{S}\times\mathbb{R}^d \to \mathcal{A}$ trained jointly with $F$ and $B$ to approximate the maximizer. Architectural details and learning rates are deferred to Appendix~\ref{sec:hyperparameters}.
Further, Eq.~\eqref{eq:fb_criterion} represents a fixed-point condition for the triplet $(F, B, \pi_z)$ since $F$ and $B$ depend on $\pi_z$, and $\pi_z$ is defined via $F$ \citep{touati2021learningone}. 
Given a reward function $r\colon  \mathcal{S} \times \mathcal{A} \to \mathbb{R}$, we define $z_R = B^\top r$.
If the condition holds exactly, the optimal action-value function is recovered by $Q^\star(s, a) = F(s, a, z_R)^\top z_R$.

Given a \ac{FB} representation $(F, B)$, we define the approximate successor representation as $\hat{M}^z(s, a, s', a') = F(s, a, z)^\top B(s', a')$.
The following theorem bounds the approximation error of the optimal action-value function $Q^\star$ by the approximation error in successor representation $M^{\pi_z}$:

\begin{restatable}[Optimality Gap for FB Representations]{theorem}{OptGapThm}
\label{theorem:opt_gap}
Let $r\colon \mathcal{S} \times \mathcal{A} \rightarrow \mathbb{R}$ be a reward function such that $z_R = B^\top r$. The approximation error of the optimal Q-function is bounded by:
\begin{equation}\label{eq:fb_bound}
    \left\| F(\,\cdot\,, \,\cdot\,,\ z_R)^\top z_R - Q^{\star} \right\|_\infty \leq \frac{2 C_\mathrm{norm} \; \|r\|_\infty}{(1 - \gamma)} \| \hat{M}^{z_R} - M^{\pi_{z_R}} \|_{2}.
\end{equation}
\end{restatable}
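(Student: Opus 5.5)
Following \citet{touati2021learningone}, the plan is to compare the FB estimate $F(\cdot,\cdot,z_R)^\top z_R$ both with the true value of the policy it induces, $Q^{\pi_{z_R}}$, and with $Q^\star$. Writing $z=z_R$ and $\hat Q := \hat M^{z} r = F(\cdot,\cdot,z)^\top B^\top r = F(\cdot,\cdot,z)^\top z$, note that $Q^{\pi_z} = M^{\pi_z} r$. The first step is the elementary estimate
\begin{equation}
\|\hat Q - Q^{\pi_z}\|_\infty = \|(\hat M^{z} - M^{\pi_z}) r\|_\infty \le C_\mathrm{norm}\,\|\hat M^{z} - M^{\pi_z}\|_2\,\|r\|_\infty ,
\end{equation}
where $C_\mathrm{norm}$ is the finite-dimensional norm-equivalence constant, for instance $\|A\|_{\infty\to\infty} \le \sqrt{|\mathcal S\times\mathcal A|}\,\|A\|_2$. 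I interpret $C_\mathrm{norm}$ this way, since the statement leaves it implicit.

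The second step exploits greediness. Since $\pi_z(s)\in\argmax_a F(s,a,z)^\top z$, the policy $\pi_z$ is greedy with respect to $\hat Q$. Let $\epsilon := \|\hat Q - Q^{\pi_z}\|_\infty$. I then invoke the standard lemma for policies greedy with respect to an approximate Q-function. Define $\hat V(s) = \max_a \hat Q(s,a)$ and let $\mathcal T$ denote the Bellman optimality operator. Since $\hat Q$ is $\epsilon$-close to $Q^{\pi_z}$, and $Q^{\pi_z} = r + \gamma P V^{\pi_z}$ with $V^{\pi_z}(s) = Q^{\pi_z}(s,\pi_z(s))$, we have $|\hat V - V^{\pi_z}| \le \epsilon$, because $\pi_z$ attains the max of $\hat Q$. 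Hence $\|\mathcal T Q^{\pi_z} - Q^{\pi_z}\|_\infty = \gamma\|P(\max_a Q^{\pi_z} - V^{\pi_z})\|_\infty \le 2\gamma\epsilon$. The bound $\max_a Q^{\pi_z}(s,a) - Q^{\pi_z}(s,\pi_z(s)) \le 2\epsilon$ holds because $\hat Q$ is maximized at $\pi_z(s)$. Since $\mathcal T$ is a $\gamma$-contraction with fixed point $Q^\star$, it follows that $\|Q^\star - Q^{\pi_z}\|_\infty \le \frac{1}{1-\gamma}\|\mathcal T Q^{\pi_z} - Q^{\pi_z}\|_\infty \le \frac{2\gamma\epsilon}{1-\gamma}$.

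The third step combines the two bounds with the triangle inequality:
\begin{equation}
\|\hat Q - Q^\star\|_\infty \le \epsilon + \tfrac{2\gamma\epsilon}{1-\gamma} = \tfrac{1+\gamma}{1-\gamma}\,\epsilon \le \tfrac{2}{1-\gamma}\,\epsilon .
\end{equation}
Substituting the first step then gives exactly \eqref{eq:fb_bound}.

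The step I expect to need the most care is the second. The delicate points are keeping the factor at $2$ rather than losing an extra $1/(1-\gamma)$, and checking that the greedy step uses $\hat Q$ itself, not $Q^{\pi_z}$. If the sharper route fails, I would fall back on the cruder policy-difference argument, $Q^\star - Q^{\pi_z} = \gamma(I-\gamma P^{\pi_z})^{-1}P(V^\star - \dots)$, which still yields a constant of order $2/(1-\gamma)$. The only other point to settle is which convention for $C_\mathrm{norm}$ the norm conversion in the first step uses.
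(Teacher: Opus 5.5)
Your proof is correct, but it takes a more self-contained route than the paper. The paper never proves the approximate-greedy step. It cites Theorem 8 of Touati and Ollivier, which already gives $\|F(\cdot,\cdot,z_R)^\top z_R - Q^\star\|_\infty \le \frac{2\|r\|_A}{1-\gamma}\sup_{s,a}\|\hat M^{z_R}(s,a,\cdot,\cdot) - M^{\pi_{z_R}}(s,a,\cdot,\cdot)\|_B$ for any norm pair with $|\langle f,\mu\rangle|\le\|f\|_A\|\mu\|_B$. The paper then only does norm bookkeeping: it takes $\|\cdot\|_A=\|\cdot\|_\infty$, uses $\|\mu\|_1\le\sqrt{|\mathcal S||\mathcal A|}\,\|\mu\|_2$ to produce $C_\mathrm{norm}$, and bounds the largest row $\ell_2$-norm by the spectral norm. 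Your Step 1 is exactly this bookkeeping, since your bound $\|A\|_{\infty\to\infty}\le\sqrt{N}\,\|A\|_2$ is the same two inequalities and gives the same $C_\mathrm{norm}=\sqrt{|\mathcal S||\mathcal A|}$. Your Steps 2--3 then replace the citation with the standard argument, and it all checks out. $Q^{\pi}=M^{\pi}r$ holds under the paper's convention $M^\pi=\sum_{t\ge 0}\gamma^t (P^\pi)^t$, and $\hat M^{z_R}r=F(\cdot,\cdot,z_R)^\top z_R$. Greediness of $\pi_{z_R}$ for $\hat Q$ gives $\max_a Q^{\pi_{z_R}}-Q^{\pi_{z_R}}(\cdot,\pi_{z_R}(\cdot))\le 2\epsilon$, hence $\|\mathcal T Q^{\pi_{z_R}}-Q^{\pi_{z_R}}\|_\infty\le 2\gamma\epsilon$, and contraction gives $\|Q^\star-Q^{\pi_{z_R}}\|_\infty\le 2\gamma\epsilon/(1-\gamma)$. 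The paper's route buys brevity and inherits the general setting of Theorem 8 (continuous spaces, $\rho$-weighted embeddings, arbitrary norm pairs). Yours can be verified on the spot and has three further advantages. First, it avoids the unstated step that Theorem 8's proof uses only the pairing inequality, so that a merely compatible pair such as $(\|\cdot\|_\infty,\,C_\mathrm{norm}\|\cdot\|_2)$ is admissible. Second, it shows that exact greediness of $\pi_{z_R}$ with respect to $F(\cdot,\cdot,z_R)^\top z_R$ is what produces the factor $2$. This matters because the experiments replace the argmax with a learned actor, which would add an actor-suboptimality term. Third, it gives the slightly sharper constant $(1+\gamma)/(1-\gamma)$. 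Your fallback policy-difference argument is not needed.
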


\noindent Here, $C_\mathrm{norm}$ is a constant arising from the choices of norms, equal to $\sqrt{|\mathcal{S}||\mathcal{A}|}$ in our finite case; see Appendix~\ref{sec:proofs} for details. Further, $\|\cdot\|_{\infty}$ denotes the $L_\infty$ or Chebyshev norm, which for a function $f\colon \mathcal{S} \times \mathcal{A} \to \mathbb{R}$ is defined as $\|f\|_{\infty} = \sup_{(s, a)} |f(s,a)|$. The norm $\|\cdot\|_2$ denotes the $L_2$ or spectral norm of a matrix, defined as $\|M\|_2 = \sup_{x \neq 0} \|Mx\|_2 / \|x\|_2$, which corresponds to the largest singular value of $M$.
Note that Theorem~\ref{theorem:opt_gap} is a simplified version of the result in \cite[Theorem 8]{touati2021learningone} tailored to our spectral analysis setting.


\subsection{Spectral Bound on Approximation Error}
To understand the approximation capacity of the \ac{FB} framework, we derive a lower bound on the approximation error appearing on the right-hand side of Eq.~\eqref{eq:fb_bound} based on the spectrum of $M^{\pi_{z_R}}$.
Related to this is the work in \cite{dubail2025shift}, which performs a similar study with a focus on finite-sample analysis. In this work, we do not aim to derive the tightest possible bound, but rather to develop a simple theoretical framework that highlights the effect of temporal abstractions on the optimal approximation error. We leave a finite-sample analysis to future work.

Due to limited representational capacity when $d$ is small, the \ac{FB} criterion cannot generally be fulfilled exactly, even in the finite case.
Furthermore, the \ac{FB} representation must simultaneously reconstruct the successor representation and define a greedy policy, as shown in Eq.~\eqref{eq:fb_criterion}.
By the Eckart–Young–Mirsky theorem \citep{eckart1936theapproximation}, the best rank-$d$ approximation of $M^{\pi_{z_R}}$ is obtained via the truncated \ac{SVD}, denoted by $M^\star$, which satisfies $\|M^\star - M^{\pi_{z_R}}\|_2 = \sigma_{d+1}(M^{\pi_{z_R}})$. Intuitively, $\sigma_{d+1}(M^{\pi_{z_R}})$ corresponds to the first discarded singular value. Motivated by this observation, we define the following:

\begin{definition}[\Acl{FB} Realization Error]
\label{def:real_error}
Given a reward function $r\colon \mathcal{S} \times \mathcal{A} \to  \mathbb{R}$, and a \ac{FB} representation $(F, B)$, we define the \emph{\ac{FB} realization error} as the difference to the optimal rank $d$ approximation:
$\epsilon_{\mathrm{real}}(r) \coloneqq \| \hat{M}^{z_R} - M^{\pi_{z_R}} \|_2 - \sigma_{d+1}(M^{\pi_{z_R}}) \geq 0$.
\end{definition}
Consequently, the optimality gap in Eq.~\eqref{eq:fb_bound} is governed by the decay of the representation's singular values,
\begin{equation}\label{eq:fb_spectral_bound}
    \left\| F(\,\cdot\,, \,\cdot\,,\ z_R)^\top z_R - Q^{\star} \right\|_\infty \leq \frac{2 C_\mathrm{norm}\; \|r\|_\infty}{(1 - \gamma)} \left(\epsilon_\mathrm{real}(r) + \sigma_{d + 1}(M^{\pi_{z_R}})\right),
\end{equation}
assuming that the error $\epsilon_\mathrm{real}(r)$ stays bounded.
This decomposition separates the FB realization error $\epsilon_\mathrm{real}(r)$ from the spectral truncation error $\sigma_{d+1}(M^{\pi_{z_R}})$ determined by the singular values of the successor representation.

\section{Temporal Abstraction in Forward-Backward Representations} 
\label{sec:analysis}

Our goal is to demonstrate that temporal abstraction is beneficial for learning \ac{FB} representations.
To this end, we introduce a simple temporal abstraction, namely action repetition.
Action repetition was introduced in \cite{mnih2015humanlevel} and has been shown to be beneficial for exploration and learning performance in model-free \ac{RL} \citep{biedenkapp2021temporl}.


\subsection{Action Repetition for Temporal Abstraction}

In the following, we first formally introduce the concept of action-repeat \acp{MDP}, provide the necessary assumptions for this work, and conclude by connecting these concepts to the \ac{FB} representation.

\begin{definition}[Action-Repeat MDP]
\label{def:action_repeat_mdp}
Given a reward-free MDP $\mathcal{M} = (\mathcal{S}, \mathcal{A}, P, \gamma)$, an action-repeat MDP $\widetilde{\mathcal{M}}$ with repeat factor $k \in \mathbb{N}$ is defined by the tuple $(\mathcal{S}, \mathcal{A}, \widetilde{P}, \gamma^k)$. 
The transition probability $\widetilde{P}(s'|s, a)$ represents the probability of reaching state $s'$ after executing action $a$ for $k$ consecutive time steps in $\mathcal{M}$. Mathematically, this is the $k$-fold composition of the transition operator:
\begin{equation}
    \widetilde{P}(s'|s, \textcolor{purple}{a}) = \sum_{(s_1, \dotsc, s_{k-1}) \in S^{k-1}} P(s'|s_{k-1}, \textcolor{purple}{a}) \cdots P(s_1|s, \textcolor{purple}{a}).
\end{equation}
Note that for $k=1$ we define $\widetilde{P}(s' | s, a) = P(s'|s,a)$.
\end{definition}

Given this definition, and following Section~\ref{sec:background}, we define the \acl{SR} $\widetilde{M}^\pi$ and the optimal state-action function $\widetilde{Q}^\star$ accordingly.
To measure the error that is introduced by the action repetition, we introduce the following definition:

\begin{definition}[Action-Repeat Value Error]
\label{def:repeat_error}
For a given repeat factor $k$, we define the \emph{action-repeat value error} as the worst-case discrepancy between the optimal Q-value function of the original MDP $\mathcal{M}$ and that of the action-repeat MDP $\widetilde{\mathcal{M}}$ as $\epsilon_\mathrm{repeat} (k) \coloneqq \| Q^\star - \widetilde{Q}^{\star} \|_\infty$.
\end{definition}

For the remainder of this paper, we assume the existence of a repeat factor $k$ such that the resulting action-repeat value error $\epsilon_{\mathrm{repeat}}(k)$ is negligibly small. Furthermore, all representations $(F, B)$ and successor measures $\hat{M}$ are hereafter assumed to be trained on the action-repeat \ac{MDP} $\widetilde{\mathcal{M}}$.

\subsection{Action Repetition Reduces the Optimality Gap}

In the following, we will highlight that repeating each action for $k$ steps introduces a trade-off between the action-repeat error $\epsilon_\mathrm{repeat}(k)$ and an accelerated spectral decay.
We denote the action-repeat policy-induced transition matrix by $\widetilde{P}^\pi \in \mathbb{R}^{|\mathcal{S} \times \mathcal{A}| \times |\mathcal{S} \times \mathcal{A}|}$ with entries $\widetilde{P}^\pi((s,a), (s', a')) = \widetilde{P}(s' \mid s, a)\,\pi(a' \mid s')$, and write $P_a \in \mathbb{R}^{|\mathcal{S}| \times |\mathcal{S}|}$ with entries $P_a(s, s') = P(s' \mid s, a)$ for the single-step state-transition matrix under a fixed action $a$.
We first combine our prior definitions to bound the overall approximation error. To derive this specific bound, we require the joint transition dynamics $\widetilde{P}^\pi$ to be diagonalizable, as stated in Assumption~\ref{assumption:diag_joint}.

\begin{restatable}[Spectral Bound of Optimality Gap for $k$-repeat FB Representations]{lemma}{OptGapRepeatLem}\label{lem:the_lemma}
Given a reward function $r\colon \mathcal{S} \times \mathcal{A} \to  \mathbb{R}$, let $\widetilde{r}(s, a) \coloneqq \mathbb{E}_P[\sum_{t=0}^{k-1} \gamma^t r(s_t, a) \mid s_0 = s]$ be the corresponding $k$-step expected reward function, and define $z_{\widetilde{r}} \coloneqq B^\top \widetilde{r}$.
Under Assumption~\ref{assumption:diag_joint}, let $(F, B)$ be an \ac{FB} representation with dimension $d$. Then the error in approximating the original optimal action-value function $Q^\star$ is bounded by:
\begin{equation}
 \left\| F(\,\cdot\,, \,\cdot\,, z_{\widetilde{r}})^\top z_{\widetilde{r}} - Q^{\star} \right\|_\infty
  \leq \epsilon_\mathrm{repeat}(k) + \frac{2 C_\mathrm{norm} \; \|r\|_\infty}{1 - \gamma} \left( \widetilde{\epsilon}_\mathrm{real}(\widetilde{r}) + \frac{C_{\mathrm{SF}}}{1 - \gamma^k |\lambda_{d+1}(\widetilde{P}^\pi)|}\right).
\end{equation}
\end{restatable}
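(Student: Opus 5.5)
The proof is a three-step chain. First a triangle inequality through $\widetilde{Q}^\star$, then Theorem~\ref{theorem:opt_gap} applied inside the action-repeat MDP, then a spectral estimate on the singular values of the action-repeat SR.

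\textbf{Step 1 (triangle inequality).} We write
\begin{equation}
\bigl\| F(\cdot,\cdot,z_{\widetilde r})^\top z_{\widetilde r} - Q^\star \bigr\|_\infty \le \bigl\| F(\cdot,\cdot,z_{\widetilde r})^\top z_{\widetilde r} - \widetilde Q^\star \bigr\|_\infty + \| \widetilde Q^\star - Q^\star\|_\infty .
\end{equation}
By Definition~\ref{def:repeat_error}, the second term is exactly $\epsilon_\mathrm{repeat}(k)$.

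\textbf{Step 2 (Theorem~\ref{theorem:opt_gap} in $\widetilde{\mathcal M}$).} $\widetilde{\mathcal M}$ is an ordinary finite MDP with discount $\gamma^k$ and reward $\widetilde r$, and $(F,B)$ is trained on it. Theorem~\ref{theorem:opt_gap} therefore gives
\begin{equation}
\bigl\| F^\top z_{\widetilde r} - \widetilde Q^\star\bigr\|_\infty \le \frac{2C_\mathrm{norm}\|\widetilde r\|_\infty}{1-\gamma^k}\,\|\hat M^{z_{\widetilde r}} - \widetilde M^{\pi}\|_2 .
\end{equation}
Since $\|\widetilde r\|_\infty \le \sum_{t=0}^{k-1}\gamma^t\|r\|_\infty = \frac{1-\gamma^k}{1-\gamma}\|r\|_\infty$, the prefactor collapses to $\frac{2C_\mathrm{norm}\|r\|_\infty}{1-\gamma}$. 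This explains why the lemma carries $1-\gamma$ rather than $1-\gamma^k$.

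Next we split the SR error as in Definition~\ref{def:real_error}, applied to $\widetilde{\mathcal M}$:
\begin{equation}
\|\hat M^{z_{\widetilde r}} - \widetilde M^{\pi}\|_2 = \widetilde\epsilon_\mathrm{real}(\widetilde r) + \sigma_{d+1}(\widetilde M^\pi).
\end{equation}
Here $\pi = \pi_{z_{\widetilde r}}$.

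\textbf{Step 3 (spectral bound, the main obstacle).} It remains to show $\sigma_{d+1}(\widetilde M^\pi)\le C_\mathrm{SF}/(1-\gamma^k|\lambda_{d+1}(\widetilde P^\pi)|)$. By Assumption~\ref{assumption:diag_joint}, $\widetilde P^\pi = V\Lambda V^{-1}$, so $\widetilde M^\pi = V(I-\gamma^k\Lambda)^{-1}V^{-1}$. Its eigenvalues are $\mu_i = 1/(1-\gamma^k\lambda_i)$, and $|\mu_i|\le 1/(1-\gamma^k|\lambda_i|)$. We order the eigenvalues so that $|\lambda_i|$ is nonincreasing; then $|\mu_i|$ is, up to this bound, ordered accordingly.

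Singular values are not eigenvalues for non-normal matrices, so we need a perturbation step. Let $\Lambda_{\le d}$ denote the diagonal part of $(I-\gamma^k\Lambda)^{-1}$ keeping the top $d$ entries. The matrix $V\Lambda_{\le d}V^{-1}$ has rank $d$, so by Eckart--Young--Mirsky
\begin{equation}
\sigma_{d+1}(\widetilde M^\pi)\le \|V(\Lambda_{>d})V^{-1}\|_2 \le \kappa(V)\max_{i>d}|\mu_i| \le \frac{\kappa(V)}{1-\gamma^k|\lambda_{d+1}|}.
\end{equation}
This identifies $C_\mathrm{SF}=\kappa(V)=\|V\|_2\|V^{-1}\|_2$, or whatever constant Assumption~\ref{assumption:diag_joint} fixes. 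The delicate points are handling complex eigenvalues (via the modulus, which the bound already does) and making sure $\max_{i>d}|1/(1-\gamma^k\lambda_i)|$ is controlled by $|\lambda_{d+1}|$ under the modulus ordering. The latter follows from $|1-\gamma^k\lambda|\ge 1-\gamma^k|\lambda|$ together with monotonicity in $|\lambda|$.

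Substituting Step 3 into Steps 1 and 2 yields the stated inequality.
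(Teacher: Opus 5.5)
Your proposal is correct and follows the paper's proof. The shared structure is: a triangle inequality through $\widetilde{Q}^\star$, then Theorem~\ref{theorem:opt_gap} applied in $\widetilde{\mathcal{M}}$, with the $k$-step reward bound cancelling the $1-\gamma^k$ factor, then the realization-error split, and finally a diagonalization bound giving $C_{\mathrm{SF}}=\kappa$. The only difference is in the spectral step: you apply Eckart--Young to the explicit rank-$d$ competitor $V\Lambda_{\le d}V^{-1}$, whereas the paper uses $\sigma_{d+1}(S\Phi S^{-1})\le\kappa(S)\,\sigma_{d+1}(\Phi)$ followed by a sorting argument on the $\phi_i$; your version sidesteps that ordering issue for complex eigenvalues and yields the same constant.
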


Here $|\lambda_{d+1}(\widetilde{P}^\pi)|$ denotes the $(d{+}1)$-th largest absolute eigenvalue of $\widetilde{P}^\pi$, and $C_{\mathrm{SF}} > 0$ is a constant from the spectral truncation of the successor representation; since $\widetilde{P}^\pi$ is row-stochastic and $\gamma^k < 1$, the denominator is automatically positive.
Lemma~\ref{lem:the_lemma} captures the trade-off between repetition error $\epsilon_\mathrm{repeat}$, the \ac{FB} realization error $\widetilde{\epsilon}_\mathrm{real}$, and the spectral truncation controlled by $|\lambda_{d+1}(\widetilde{P}^\pi)|$.

As a next step, we examine how $|\lambda_{d+1}(\widetilde{P}^\pi)|$ changes depending on the number of repeats $k$. Specifically, we relate the spectrum of $\widetilde{P}^\pi$ to that of the per-action transition matrices $P_a$ collected into a single block-diagonal matrix $\mathbf{P}_\mathcal{A} \coloneqq \mathrm{diag}(P_{a_1}, \dotsc, P_{a_{|\mathcal{A}|}})$.
As stated in Assumption~\ref{assumption:diag_blocks}, we require the individual action blocks $P_a$ to be diagonalizable to cleanly bound the spectrum of matrix powers.

\begin{restatable}[Eigenvalue Contraction under Action Repetition]{lemma}{EigContractionLem}\label{lem:eig_contraction}
Under Assumption~\ref{assumption:diag_blocks}, the $(d{+}1)$-th largest absolute eigenvalue of $\widetilde{P}^\pi$ contracts exponentially in $k$:
\begin{equation}
|\lambda_{d+1}(\widetilde{P}^\pi)| \;\leq\; C_{\mathrm{rep}}\,|\lambda_{d+1}(\mathbf{P}_{\mathcal{A}})|^k.
\end{equation}
\end{restatable}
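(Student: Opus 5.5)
The plan is to factor the action-repeat transition matrix through the block-diagonal per-action matrix $\mathbf{P}_\mathcal{A}$, so that the $k$-th power appears explicitly. From Definition~\ref{def:action_repeat_mdp}, $\widetilde{P}(s'\mid s,a) = (P_a^k)(s,s')$, hence $\widetilde{P}^\pi((s,a),(s',a')) = (P_a^k)(s,s')\,\pi(a'\mid s')$. Order the state-action pairs action-major, so that rows of $\widetilde{P}^\pi$ are grouped by $a$. Introduce the stacking matrix $J \in \mathbb{R}^{|\mathcal{S}||\mathcal{A}| \times |\mathcal{S}|}$, consisting of $|\mathcal{A}|$ vertically stacked identities, and the policy matrix $\Pi \in \mathbb{R}^{|\mathcal{S}| \times |\mathcal{S}||\mathcal{A}|}$ with $\Pi(s',(s',a')) = \pi(a'\mid s')$. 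Then I expect the exact factorization
\begin{equation}
\widetilde{P}^\pi = \mathbf{P}_\mathcal{A}^k \, J \, \Pi .
\end{equation}
This is a routine index check, up to a fixed permutation similarity which changes neither eigenvalues nor singular values.

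Next I pass from eigenvalues to singular values. This is done at both ends, and each end uses one of the diagonalizability assumptions.

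\emph{Left end (Assumption~\ref{assumption:diag_joint}).} Write $\widetilde{P}^\pi = U D U^{-1}$ with $D$ diagonal and its entries ordered by modulus. Then $|\lambda_{d+1}(\widetilde{P}^\pi)| = \sigma_{d+1}(D)$. By the multiplicative singular-value inequality $\sigma_i(XAY) \le \|X\|_2\,\sigma_i(A)\,\|Y\|_2$, applied to $D = U^{-1}\widetilde{P}^\pi U$, we get
\begin{equation}
|\lambda_{d+1}(\widetilde{P}^\pi)| \le \kappa(U)\,\sigma_{d+1}(\widetilde{P}^\pi).
\end{equation}

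\emph{Right end (Assumption~\ref{assumption:diag_blocks}).} Each block $P_a$ is diagonalizable, so $\mathbf{P}_\mathcal{A} = V\Lambda V^{-1}$ with block-diagonal $V$. Then $\mathbf{P}_\mathcal{A}^k = V\Lambda^k V^{-1}$. Applying the same inequality again gives
\begin{equation}
\sigma_{d+1}(\widetilde{P}^\pi) = \sigma_{d+1}\bigl(V \Lambda^k V^{-1} J\Pi\bigr) \le \|V\|_2\,\|V^{-1}\|_2\,\|J\Pi\|_2\,\sigma_{d+1}(\Lambda^k).
\end{equation}
Since $\Lambda$ is diagonal with entries ordered by modulus, $\sigma_{d+1}(\Lambda^k) = |\lambda_{d+1}(\mathbf{P}_\mathcal{A})|^k$.

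Combining the two ends yields the claim with
\begin{equation}
C_{\mathrm{rep}} = \kappa(U)\,\kappa(V)\,\|J\Pi\|_2 ,
\end{equation}
where $\kappa$ denotes the condition number.

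The main obstacle is the eigenvalue-to-singular-value step. For a non-normal matrix, $|\lambda_{d+1}|$ is not controlled by $\sigma_{d+1}$ in general. Weyl's majorization only controls products of the leading eigenvalues, which would give the weaker bound $|\lambda_{d+1}|^{d+1} \le \prod_{i \le d+1}\sigma_i$. This is exactly why Assumption~\ref{assumption:diag_joint} is needed, and the price is the condition number $\kappa(U)$.

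A secondary subtlety is that $U$ depends on $k$, because it diagonalizes $\widetilde{P}^\pi$, which itself depends on $k$. The constant $C_{\mathrm{rep}}$ is therefore uniform in $k$ only if $\kappa(U)$ is. If that is required, I would either state $C_{\mathrm{rep}}$ as a supremum over the repeat factors under consideration or fold it into the assumption. The remaining factors are harmless: $\|J\Pi\|_2 \le \|J\|_2\|\Pi\|_2 \le \sqrt{|\mathcal{A}|}\cdot 1$, since each row of $\Pi$ has unit $\ell_1$ mass and disjoint support, and $\kappa(V) = \max_a \kappa(V_a)$ is independent of $k$.
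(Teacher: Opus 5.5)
Your proof is correct, and it takes a genuinely different and shorter route than the paper. Both arguments rest on the same core estimate: after diagonalizing $\mathbf{P}_\mathcal{A}$ blockwise, everything beyond the top $d$ modes has size at most $\kappa\,\sqrt{|\mathcal{A}|}\,|\lambda_{d+1}(\mathbf{P}_\mathcal{A})|^k$, where $\kappa$ is the condition number of the block diagonalizer (your $V$, the paper's $U$). The paper states this as a bound on $\|E_k\|_2$ for an explicit split $\widetilde P^\pi = M_d + E_k$ with $\operatorname{rank} M_d \le d$. You state it as a bound on $\sigma_{d+1}(\widetilde P^\pi)$, which the paper's version implies via Eckart--Young.

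The two routes differ in how $|\lambda_{d+1}(\widetilde P^\pi)|$ is then extracted. The paper applies a global Bauer--Fike-type eigenvalue matching theorem (Stewart--Sun) to the diagonalizable $\widetilde P^\pi$. It then uses a pigeonhole argument on the at least $N-d$ zero eigenvalues of $M_d$, at a cost of $(2|\mathcal{S}||\mathcal{A}|-1)\,\kappa(S)$. You instead use the similarity $D = U^{-1}\widetilde P^\pi U$ (your $U$ is the paper's $S$) together with $\sigma_i(XAY)\le\|X\|_2\,\sigma_i(A)\,\|Y\|_2$. This is the same inequality the paper already uses in Lemma~\ref{lem:the_lemma}, and it costs only $\kappa(U)$.

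What your route buys:
\begin{itemize}
\item Your constant $\kappa(U)\kappa(V)\sqrt{|\mathcal{A}|}$ is smaller by the dimension factor $2|\mathcal{S}||\mathcal{A}|-1$.
\item It reduces to exactly $\sqrt{|\mathcal{A}|}$ when $\widetilde P^\pi$ and all $P_a$ are normal. The paper only says this tightening is potentially achievable under orthogonality.
\item It isolates where Assumption~\ref{assumption:diag_joint} enters: only in the eigenvalue-to-singular-value step. The singular-value contraction itself needs only Assumption~\ref{assumption:diag_blocks}.
\end{itemize}
On that last point, the paper also needs the joint assumption, since its $C_{\mathrm{rep}}$ contains $\kappa(S)$, even though the lemma is stated under Assumption~\ref{assumption:diag_blocks} alone. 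So you are not adding a hypothesis.

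What the paper's perturbative framing buys in exchange is information about the retained spectrum. The matching theorem places every eigenvalue of $\widetilde P^\pi$, not just the $(d{+}1)$-th, near an eigenvalue of the slow-mode approximant $M_d$.

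Your caveat that the diagonalizer of $\widetilde P^\pi$ depends on $k$ applies verbatim to the paper's $\kappa(S)$. Exponential contraction in $k$ is therefore only meaningful if that condition number stays bounded in $k$.

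One minor imprecision: $\kappa(V)=\max_a\kappa(V_a)$ holds only after normalizing the blocks, e.g.\ $\|V_a\|_2=1$. For arbitrary $V_a$ you only get $\kappa(V)\ge\max_a\kappa(V_a)$, which is harmless here.
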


This establishes that the spectral term in Lemma~\ref{lem:the_lemma} contracts at exponential rate $|\lambda_{d+1}(\mathbf{P}_\mathcal{A})|^k$ whenever $|\lambda_{d+1}(\mathbf{P}_\mathcal{A})| < 1$, which requires $d \geq |\mathcal{A}|$ since $\mathbf{P}_\mathcal{A}$ has $|\mathcal{A}|$ unit eigenvalues.  The constant $C_{\mathrm{rep}} > 0$ is a worst-case bound that grows with $|\mathcal{S}|$ and $|\mathcal{A}|$.
The constants could potentially be tightened to $C_{\mathrm{rep}} = \sqrt{|\mathcal{A}|}$ and $C_{\mathrm{SF}} = 1$ under significantly stronger structural assumptions such as orthogonality.
Proofs as well as details on the constants and assumptions are provided in Appendix~\ref{sec:proofs}.

In practice, we believe the spectral decay of $\widetilde{P}^\pi$ to be significantly faster than suggested by the worst-case bound derived here.

\begin{figure}[tb]
    \centering
    \begin{subfigure}[b]{0.24\textwidth}
        \centering
        \includegraphics[width=0.99\textwidth]{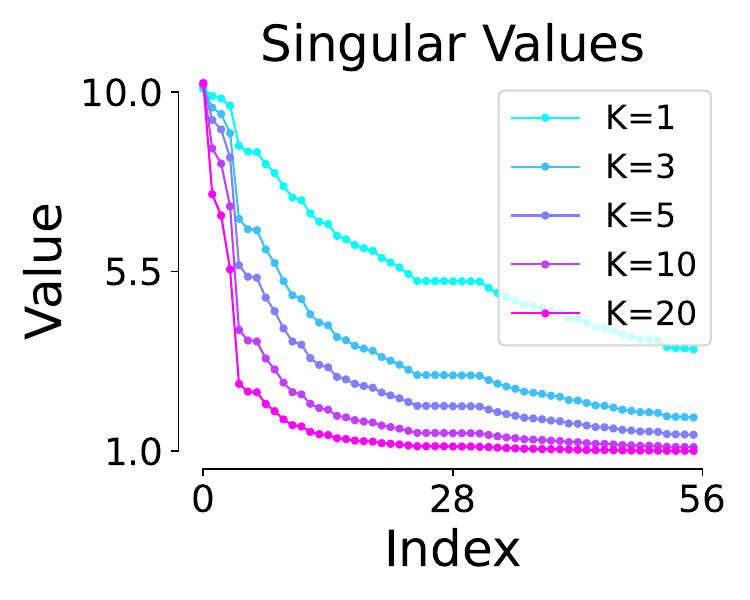}
        \label{fig:disc_sv_vary_k}
    \end{subfigure}
    \begin{subfigure}[b]{0.24\textwidth}
        \centering
        \includegraphics[width=0.99\textwidth]{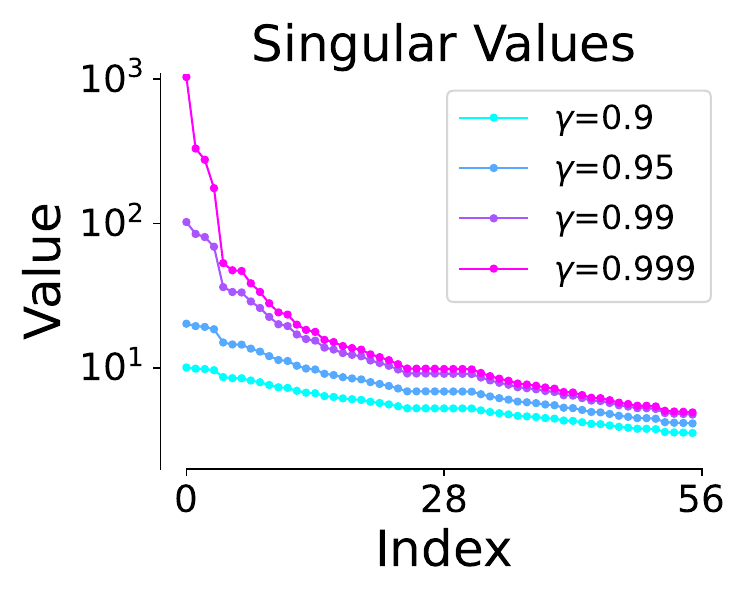}
        \label{fig:disc_sv_vary_gamma}
    \end{subfigure}
    \begin{subfigure}[b]{0.24\textwidth}
        \centering
        \includegraphics[width=0.99\textwidth]{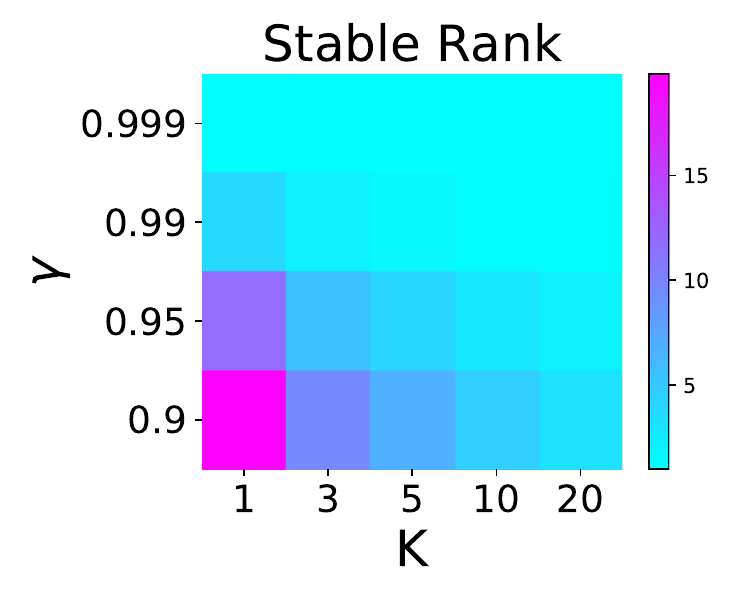}
        \label{fig:disc_srank}
    \end{subfigure}
    \begin{subfigure}[b]{0.24\textwidth}
        \centering
        \includegraphics[width=0.99\textwidth]{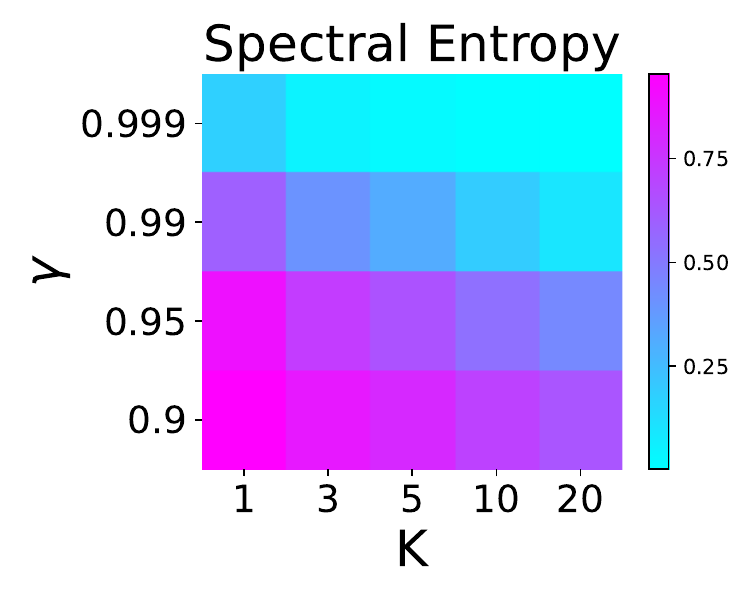}
        \label{fig:disc_ent}
    \end{subfigure}
    
    \begin{subfigure}[b]{0.24\textwidth}
        \centering
        \includegraphics[width=0.99\textwidth]{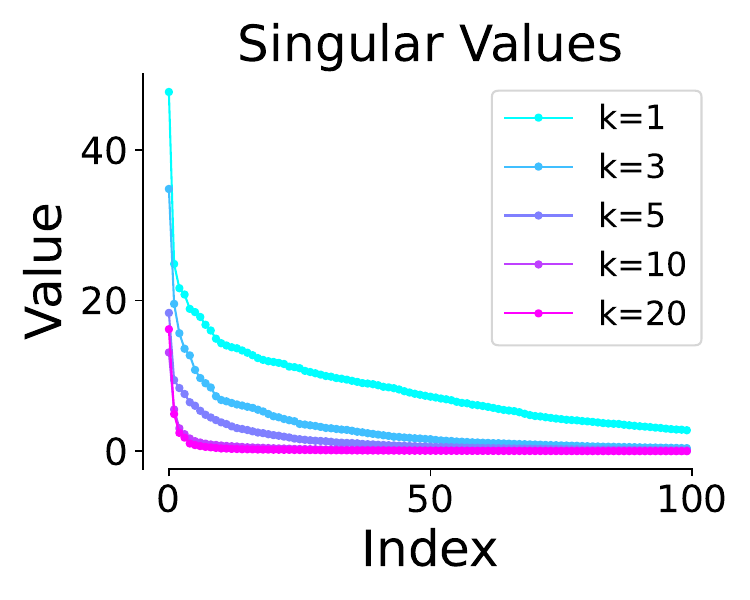}
        \label{fig:cont_sv_vary_k}
    \end{subfigure}
    \begin{subfigure}[b]{0.24\textwidth}
        \centering
        \includegraphics[width=0.99\textwidth]{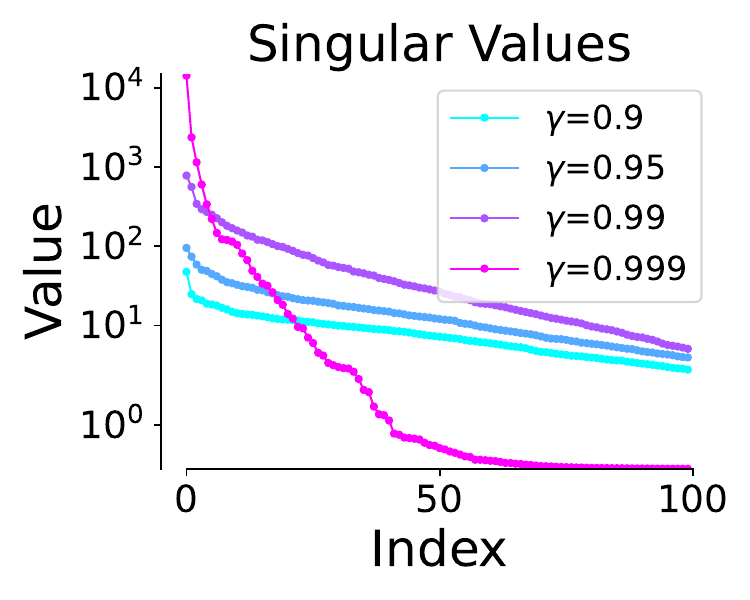}
        \label{fig:cont_sv_vary_gamma}
    \end{subfigure}
    \begin{subfigure}[b]{0.24\textwidth}
        \centering
        \includegraphics[width=0.99\textwidth]{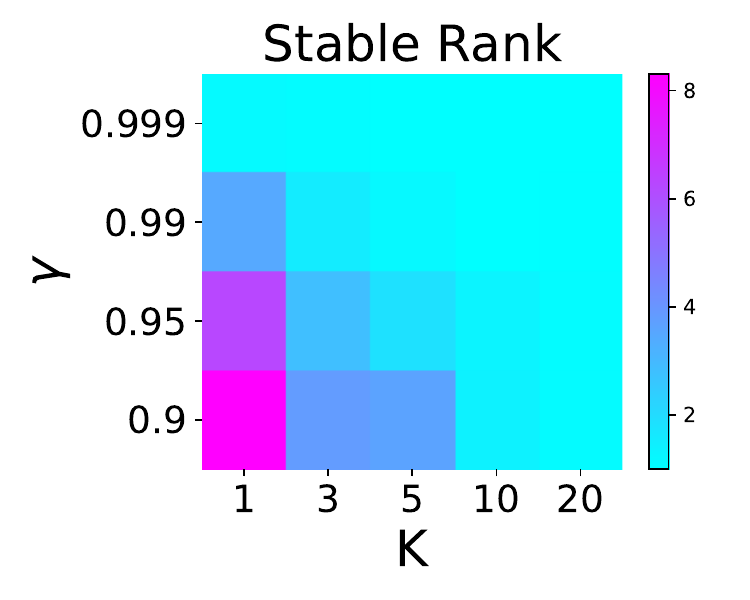}
        \label{fig:cont_srank}
    \end{subfigure}
    \begin{subfigure}[b]{0.24\textwidth}
        \centering
        \includegraphics[width=0.99\textwidth]{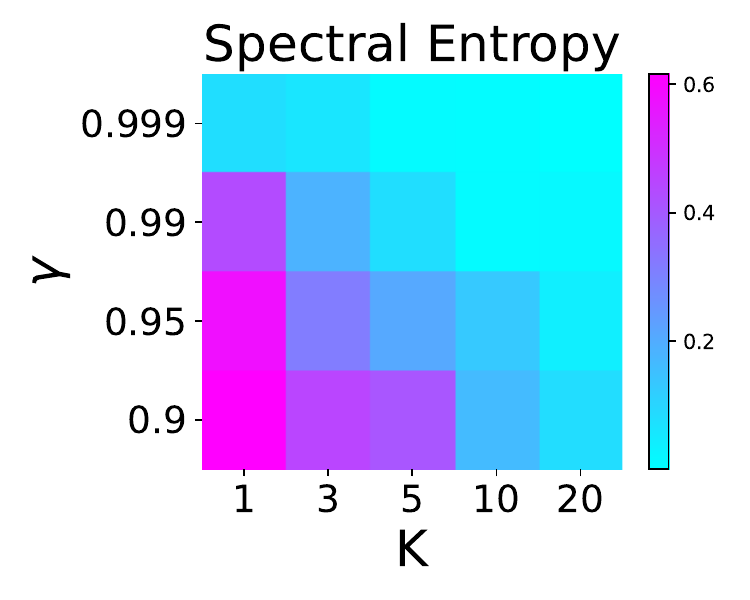}
        \label{fig:cont_ent}
    \end{subfigure}
    \caption{\textbf{Effect of temporal abstraction and discount factor on effective rank.} Effective rank decreases by increasing $k$ or $\gamma$ in both discrete \textit{(top)} and continuous \textit{(bottom)}. Entropy decreases more smoothly as $k$ increases, suggesting a more stable reduction in effective rank as compared to increasing $\gamma$.}
    \label{fig:spectral_analysis}
\end{figure}

\section{Temporal Abstraction in Practice}
\label{sec:experiments}
We empirically validate the spectral insights from previous sections by examining how temporal abstraction shapes the structure and learnability of \ac{FB} representations. 
We introduce spectral metrics for the effective rank of the \ac{SR}, describe the experimental setup, and analyze how temporal abstraction reshapes the SR spectrum and affects performance. 
Finally, we study its interaction with embedding dimension and discount factor, highlighting their joint role in the stability and effectiveness of \ac{FB} representation learning.

\subsection{Spectral Metrics for Representation Complexity}
\label{subsec:spectral-metric}
To quantify the structure of the \ac{SR} and the effect of temporal abstraction, we use two complementary spectral metrics.

\paragraph{Stable Rank.}
Stable rank captures how much spectral energy is concentrated in dominant directions. 
It decreases when a few leading components dominate, making it a direct proxy for low-rank approximability.

\paragraph{Normalized Spectral Entropy.}
Normalized spectral entropy measures how evenly spectral energy is distributed.
High values indicate a diffused spectrum, while low values reflect concentration in a few components. 
Unlike stable rank, which emphasizes dominant modes, spectral entropy captures the overall spread of energy.

Together, these metrics characterize effective rank, distinguishing near rank-one collapse (low stable rank and entropy) from structured concentration, where few dominant components capture most energy while multiple modes remain active. Definitions of these metrics and details regarding their calculations for discrete and continuous settings are presented in Appendix~\ref{sec:spectral_metrics}.

\begin{figure*}[b]
    \centering
    \begin{subfigure}[b]{0.13\textwidth}
        \centering
        \adjustbox{width=1.0\textwidth, height=1.0\linewidth, cfbox=black 2pt 0cm}{\includegraphics[width=0.9\textwidth]{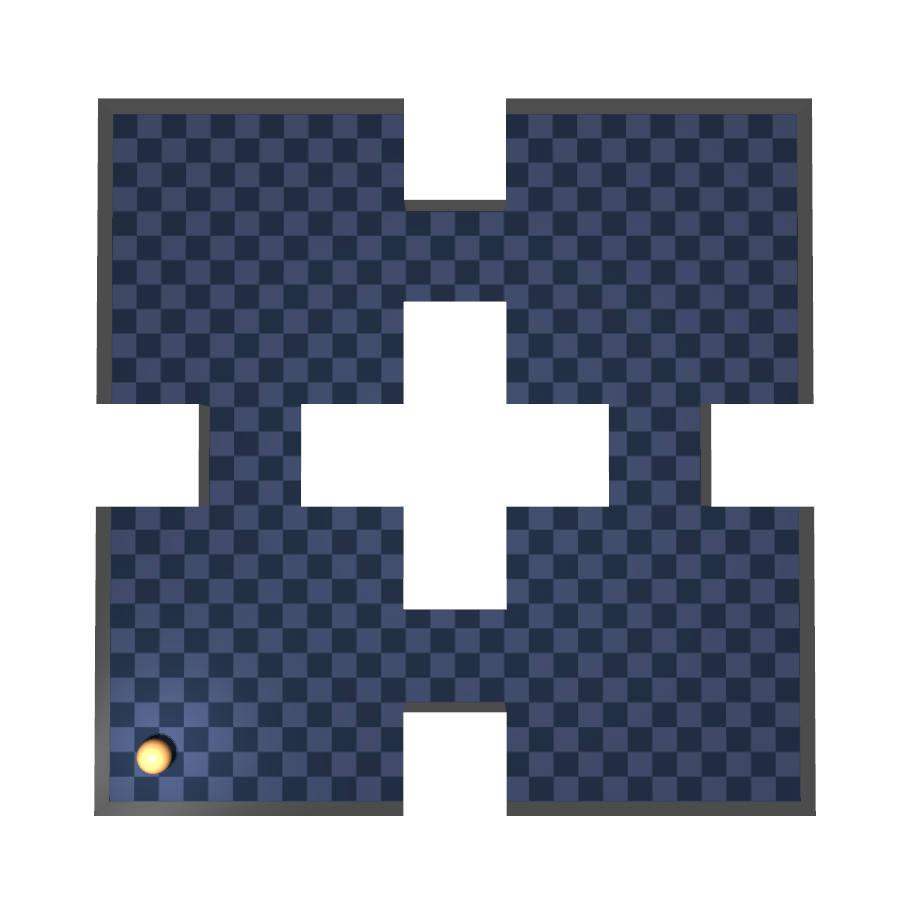}}
        \label{fig:rooms_environment}
    \end{subfigure}
    \hspace{0.5cm}
    \begin{subfigure}[b]{0.13\textwidth}
        \centering
        \adjustbox{width=1.0\textwidth, height=1.0\linewidth, cfbox=black 2pt 0cm}{\includegraphics[width=0.9\textwidth]{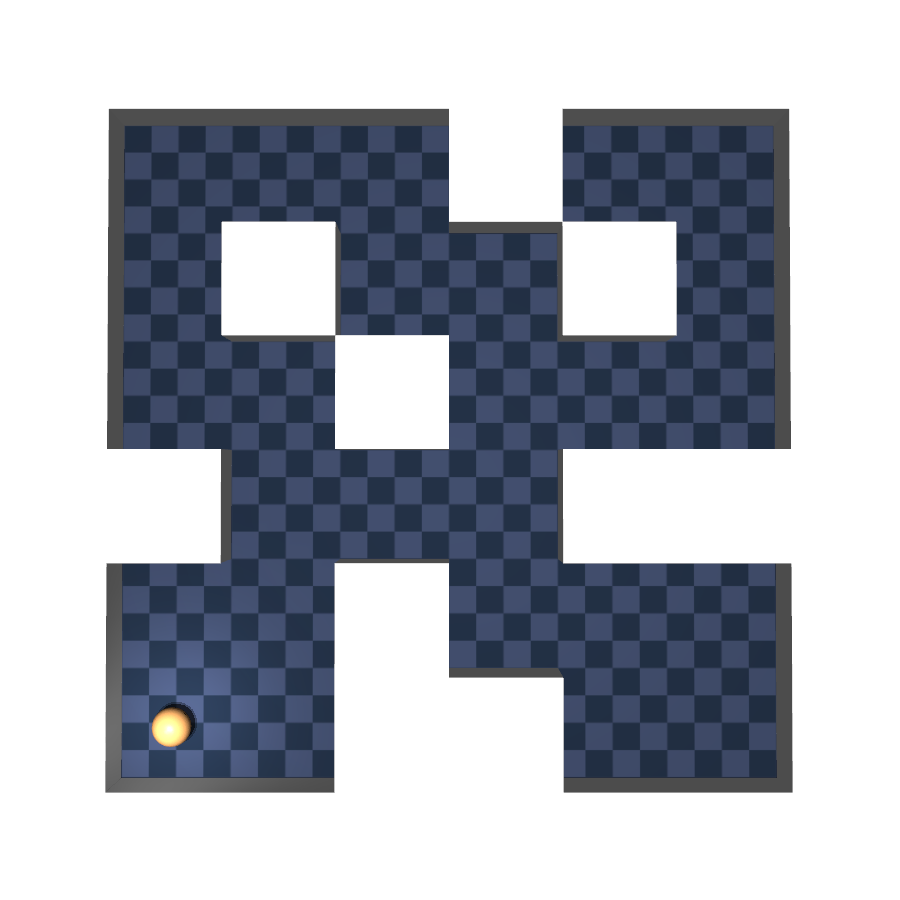}}
        \label{fig:maze_environment}
    \end{subfigure}
    \hspace{0.5cm}
    \begin{subfigure}[b]{0.168\textwidth}
        \centering
        \adjustbox{width=1.0\textwidth, height=0.775\linewidth, cfbox=black 2pt 0cm}{\includegraphics[width=0.9\textwidth]{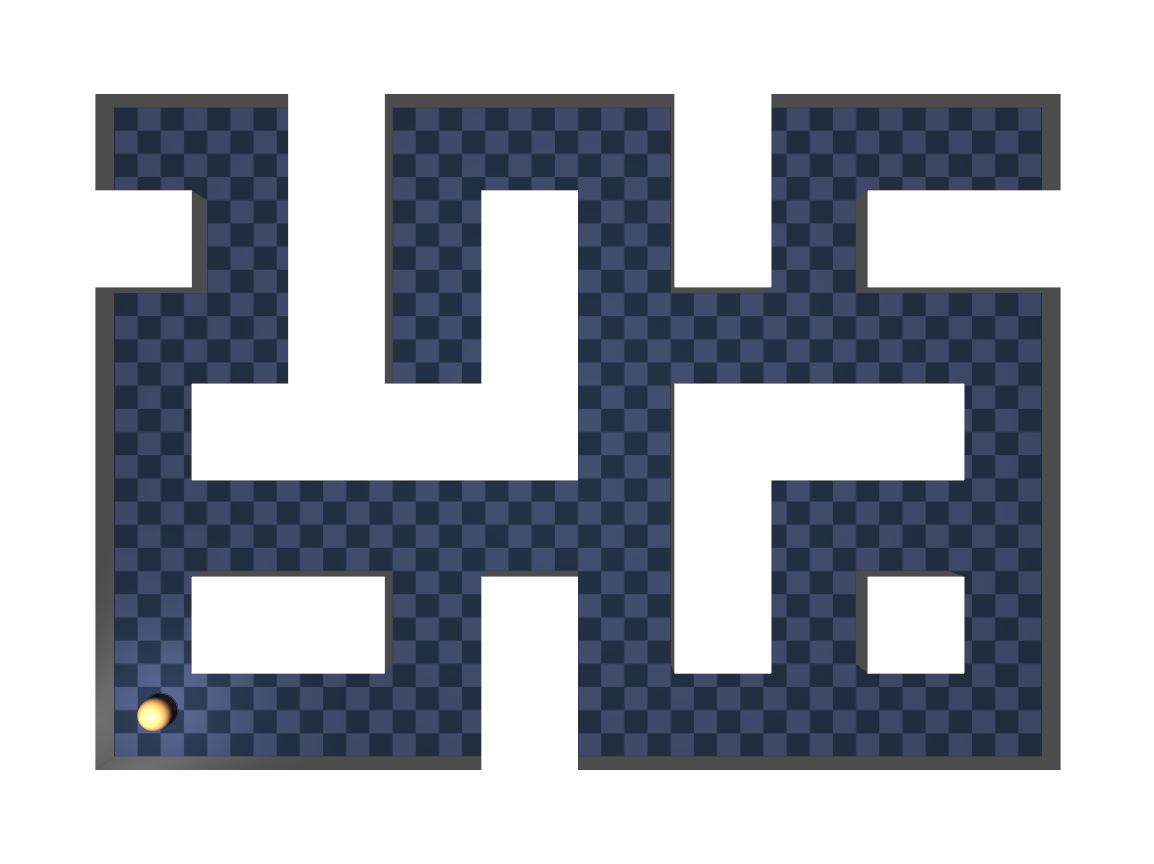}}
        \label{fig:largemaze_environment}
    \end{subfigure}
    \caption{\textbf{Continuous Navigation Environments:} \textit{Four-Rooms}, \textit{Maze}, and \textit{Large-Maze}}
    \label{fig:environments}
\end{figure*}

\subsection{Experimental Setup}
\label{subsec:exp_setup}

\paragraph{Notation.}
Throughout Sections~\ref{sec:experiments} and~\ref{sec:recipe} we follow practitioner usage: $\gamma$ refers to the \emph{nominal} discount used in training, i.e., the discount of the action-repeat MDP (corresponding to $\gamma^k$ in the notation of Definition~\ref{def:action_repeat_mdp}). The original-environment discount is then $\gamma_{\text{eff}} \coloneqq \gamma^{1/k}$. We refer to $\gamma_{\text{eff}}$ in Section~\ref{sec:recipe} when this distinction matters for analyzing horizon trade-offs.

We evaluate temporal abstraction via action repetition in three continuous maze navigation tasks of increasing difficulty: \textit{Four-Rooms}, \textit{Maze}, and \textit{Large-Maze} (Figure~\ref{fig:environments}), implemented in OGBench \citep{ogbench_park2025} with random start and goal positions.

Unless stated otherwise, experiments use Four-Rooms with discount factor $\gamma=0.95$, embedding dimension $d=100$, and action repetition $k=10$. 
Following \citet{touati2021learningone}, states are encoded from $(x,y)$ coordinates using an RBF kernel; similar results hold with learned CNN encoders (Figure~\ref{fig:input_modality}). 
We report the mean episodic return and the 95 percent confidence interval over five seeds. Each model is trained for one million gradient update steps.
Key hyperparameters and implementation details are presented in Appendix~\ref{sec:hyperparameters}.


\begin{figure}[t]
    \centering
    \begin{subfigure}[b]{0.26\textwidth}
        \centering
        \includegraphics[width=0.99\textwidth]{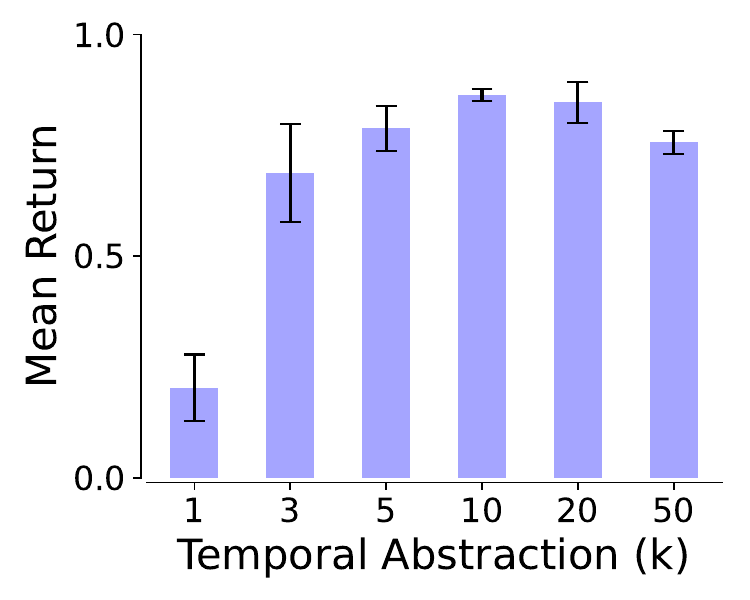}
        \caption{}
        \label{fig:k_ablation_barplot}
    \end{subfigure}
    \begin{subfigure}[b]{0.36\textwidth}
        \centering
        \includegraphics[width=0.99\textwidth]{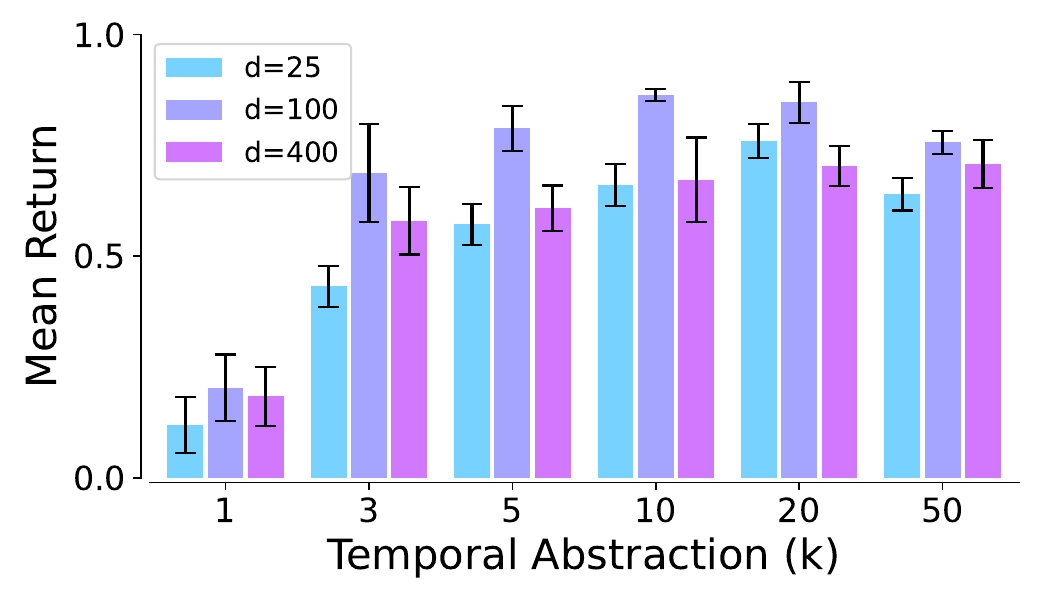}
        \caption{}
        \label{fig:k_z_ablation_barplot}
    \end{subfigure}
    \begin{subfigure}[b]{0.36\textwidth}
        \centering
        \includegraphics[width=0.99\textwidth]{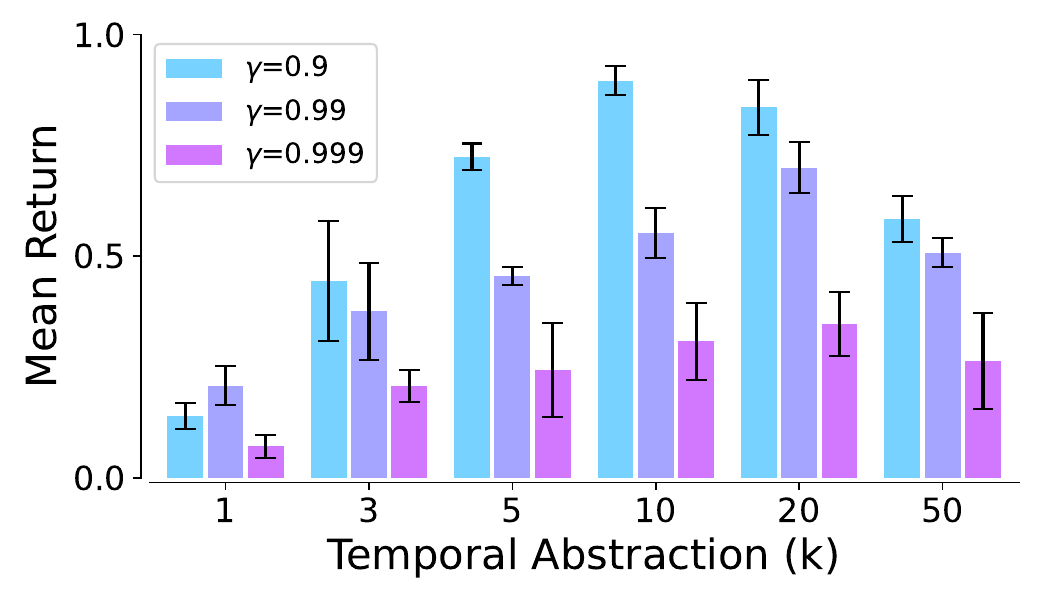}
        \caption{}
        \label{fig:k_gamma_ablation_barplot}
    \end{subfigure}
    \caption{\textbf{Effect of temporal abstraction on performance.} Ablation over temporal abstraction (k), embedding dimension (d), and discount factor ($\gamma$) using a continuous four-rooms environment. Addition of temporal abstraction ($k>1$) boosts performance, whereas increasing $d$ or $\gamma$ alone does not.}
    \label{fig:abtations_k_vs_z_vs_gamma}
\end{figure}

\subsection{Temporal Abstraction and Effective Rank}
Figure~\ref{fig:spectral_analysis} shows how increasing the temporal abstraction step $k$ reshapes the SR's singular value spectrum and reduces its effective rank. 
Across both discrete and continuous settings, larger $k$ accelerates the decay of tail singular values, concentrating energy in dominant components relevant for long-horizon control. 
This aligns the SR with the low-rank inductive bias of FB, improving representation quality.

However, excessive abstraction is detrimental. 
As stable rank and spectral entropy approach their minima, task-relevant dynamics are lost. 
This reflects the theoretical trade-off between spectral compression and bias from action repetition. 
Empirically, Figure~\ref{fig:k_ablation_barplot} shows performance degrading beyond an optimal $k$.

\subsection{Temporal Abstraction and Embedding Dimension of FB}
In principle, increasing the embedding dimension $d$ should improve SR approximation \citep{blier2021learning, touati2021learningone}. 
In continuous settings, however, this does not translate into better performance. 
Figure~\ref{fig:bellman_vary_z} shows that larger $d$ increases Bellman error, while Figure~\ref{fig:k_z_ablation_barplot} shows no performance gain without temporal abstraction ($k=1$), even when scaling $d$ from 25 to 400.

This behavior is consistent with our spectral analysis: higher capacity encourages fitting high-frequency components of a high-rank SR, which are hard to predict and amplify errors under bootstrapping. 
In contrast, temporal abstraction improves performance by reducing the effective rank of the target, simplifying the learning problem. 
Additional gains can be obtained by tuning $d$ once $k$ is fixed.

\begin{figure}[b]
    \centering
        \begin{subfigure}[b]{0.32\textwidth}
            \centering
            \includegraphics[width=0.9\textwidth]{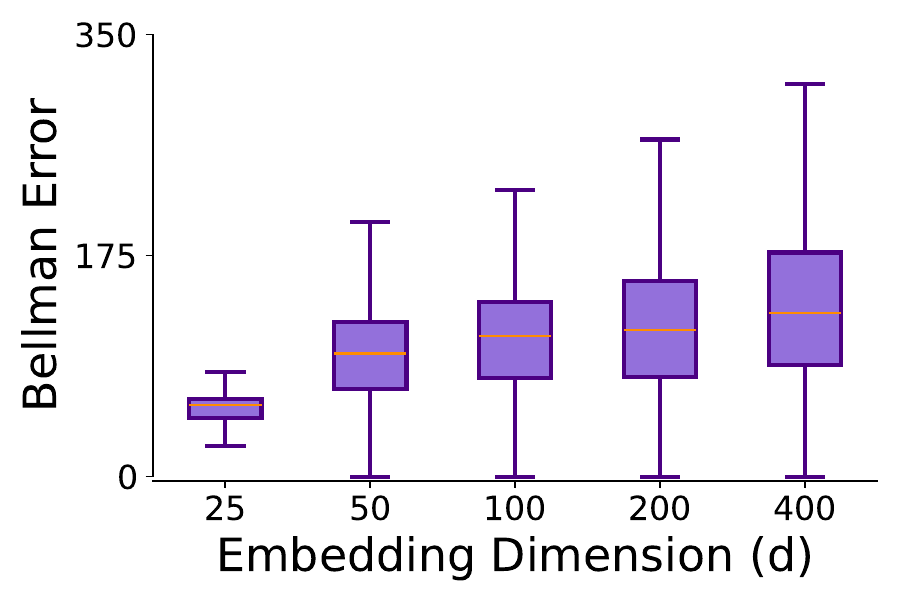}
            \caption{}
            \label{fig:bellman_vary_z}
        \end{subfigure}
        \begin{subfigure}[b]{0.32\textwidth}
            \centering
            \includegraphics[width=0.9\textwidth]{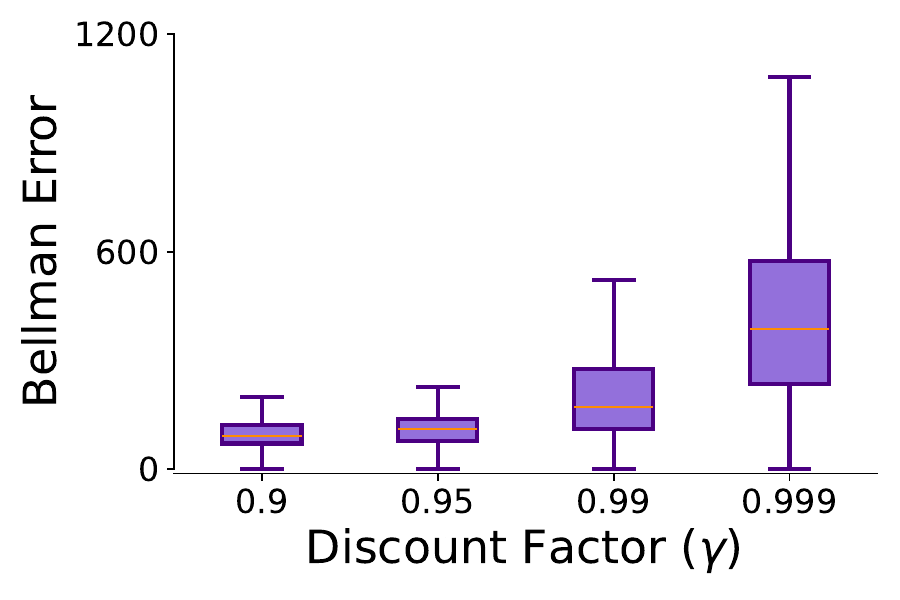}
            \caption{}
            \label{fig:bellman_vary_gamma}
        \end{subfigure}
        \begin{subfigure}[b]{0.32\textwidth}
            \centering
            \includegraphics[width=0.9\textwidth]{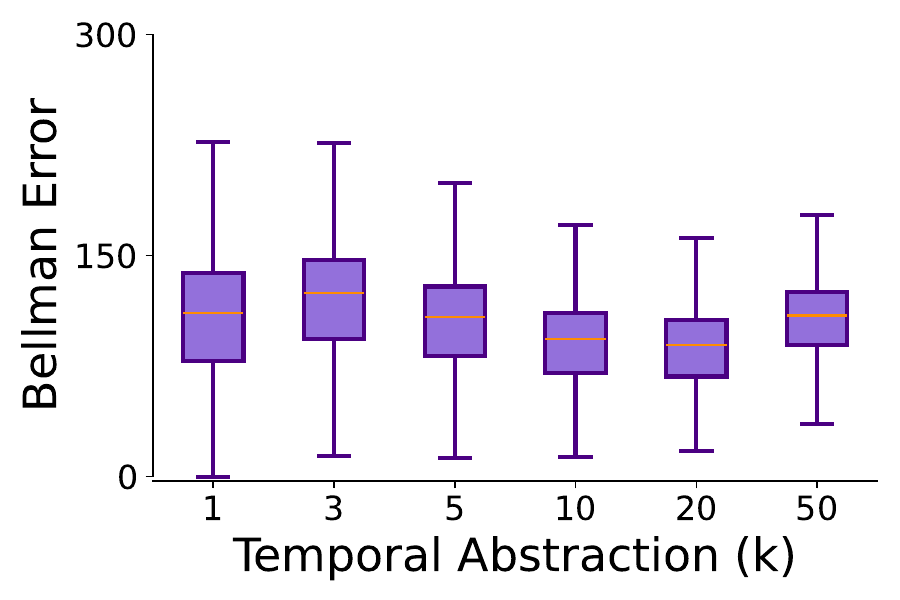}
            \caption{}
            \label{fig:bellman_vary_k}
        \end{subfigure}
    \caption{\textbf{Ablations: Bellman error.} Increasing the embedding dimension (a) or discount factor (b) without using temporal abstraction ($k=1$) leads to an increase in the Bellman error. Increasing $k$ (c) does not increase the Bellman error.}
    \label{fig:bellman_ablations}
\end{figure}
    
\begin{figure}[t]
    \centering
    \begin{subfigure}[b]{0.40\textwidth}
        \centering
        \includegraphics[width=0.85\textwidth]{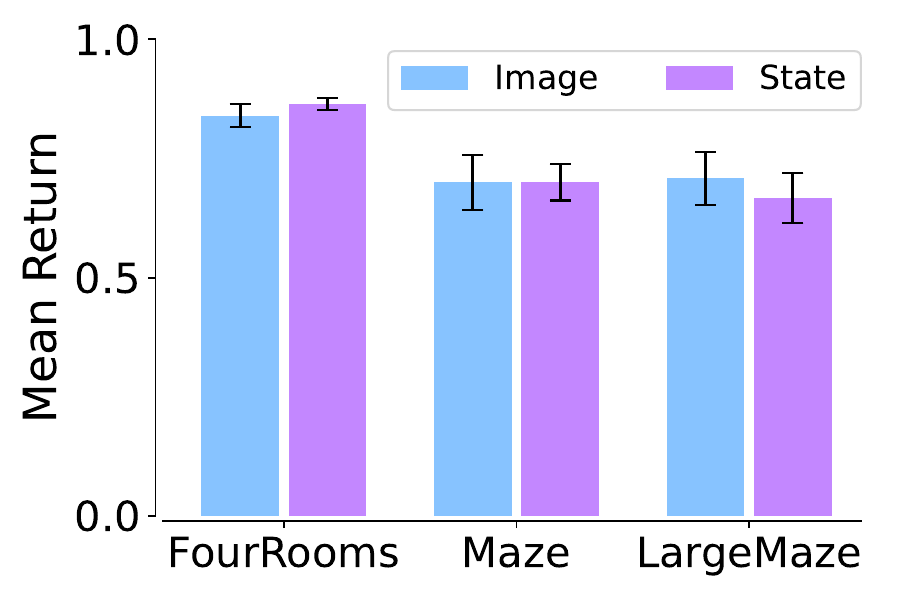}
        \caption{}
        \label{fig:input_modality}
    \end{subfigure}
    \begin{subfigure}[b]{0.40\textwidth}
        \centering
        \includegraphics[width=0.85\textwidth]{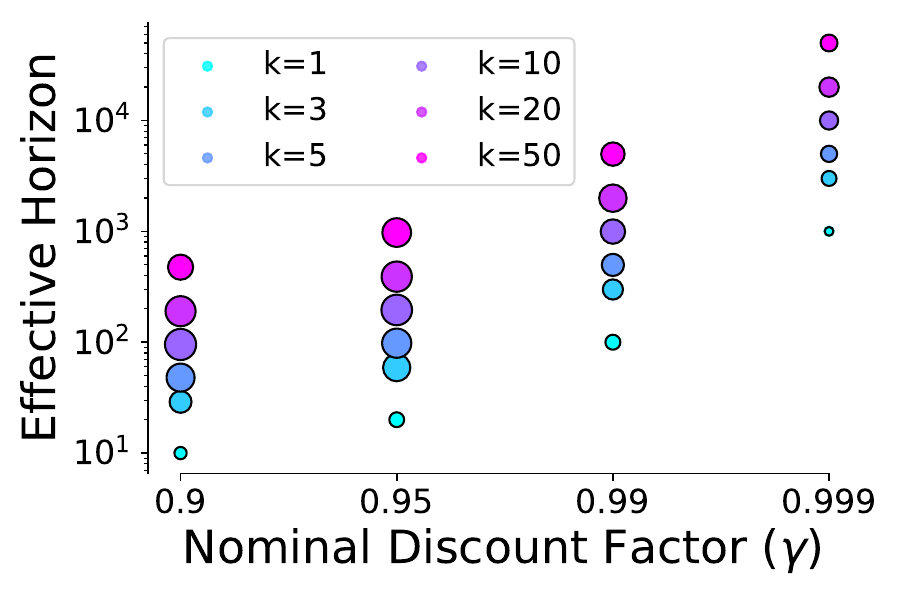}
        \caption{}
        \label{fig:effective_horizon}
    \end{subfigure}
    \caption{\textbf{Ablations: Input type and effective discount factor} (a): Image and State inputs use CNN and RBF encodings, respectively. (b): A higher return (larger radius) for a similar task horizon can be achieved by combining a lower nominal $\gamma$ with a higher $k$. The effective horizon (y-axis) is computed as $\frac{1}{(1-\gamma^{1/k})}$, expressing the repeat-MDP horizon in environment-frame units.}
    \label{fig:ablations_extra}
\end{figure}

\subsection{Spectral Dynamics: Discounting vs. Temporal Abstraction}
A low-rank \ac{SR} concentrates spectral energy in a small set of dominant components associated with long-horizon dynamics. 
As $\gamma \to 1$, these components are increasingly amplified, yielding stronger spectral concentration but also reduced training stability. 
This effect is reflected in Figure~\ref{fig:bellman_vary_gamma}, where the absolute Bellman error grows with $\gamma$.

While normalizing the Bellman residual by the magnitude of the $Q$-values reverses this trend by compensating for the scaling of SR values, the optimization dynamics are governed by the absolute residual. 
Consequently, the increase in absolute Bellman error at large $\gamma$ leads to higher gradient variance and a weaker effective contraction, which in turn degrades training stability. 
A more detailed comparison between relative and absolute Bellman errors is provided in Appendix~\ref{subsec:abs_vs_rel_bellman}.

Discounting and temporal abstraction modify the spectrum in fundamentally different ways. 
Increasing $\gamma$ amplifies existing components, including high-frequency ones, and degrades conditioning. 
In contrast, increasing $k$ smooths the dynamics by attenuating sub-dominant, high-frequency components while preserving the steady-state structure.

Although both reduce effective rank, their behavior differs sharply. 
Discount-driven compression is abrupt and unstable, often causing collapse in stable rank and entropy. 
Temporal abstraction instead induces a controlled spectral decay: the effective rank drops quickly for small $k$ and then stabilizes, maintaining higher spectral entropy. 
This enables a structured simplification of the predictive manifold without the instability of near-unity discounting.



\section{A Recipe for Effective Forward-Backward Representations}
\label{sec:recipe}
Our results show that optimal performance arises from combining moderate discounting with temporal abstraction, rather than simply maximizing $\gamma$.
Lower $\gamma$ improves stability but shortens the effective horizon; this can be compensated by increasing the action-repeat factor $k$.

We distinguish between the nominal discount $\gamma$ in the $k$-repeat \ac{MDP} and the effective discount $\gamma_{\text{eff}}$ in the original environment, related by $\gamma_{\text{eff}} = \gamma^{1/k}$. 
For a fixed task horizon, combining lower $\gamma$ with larger $k$ consistently outperforms standard high-discount settings (Figure~\ref{fig:effective_horizon}). 

Figure~\ref{fig:k_z_gamma_heatmap} further shows that moderate temporal abstraction ($k \in [5, 10]$) acts as a robust regularizer across embedding dimensions and discount factors. 
Although the optimal $k$ is environment-dependent, its inclusion yields consistent performance gains.

\begin{figure}[t]
    \centering
    \begin{subfigure}[b]{0.38\textwidth}
        \centering
        \includegraphics[width=0.9\textwidth]{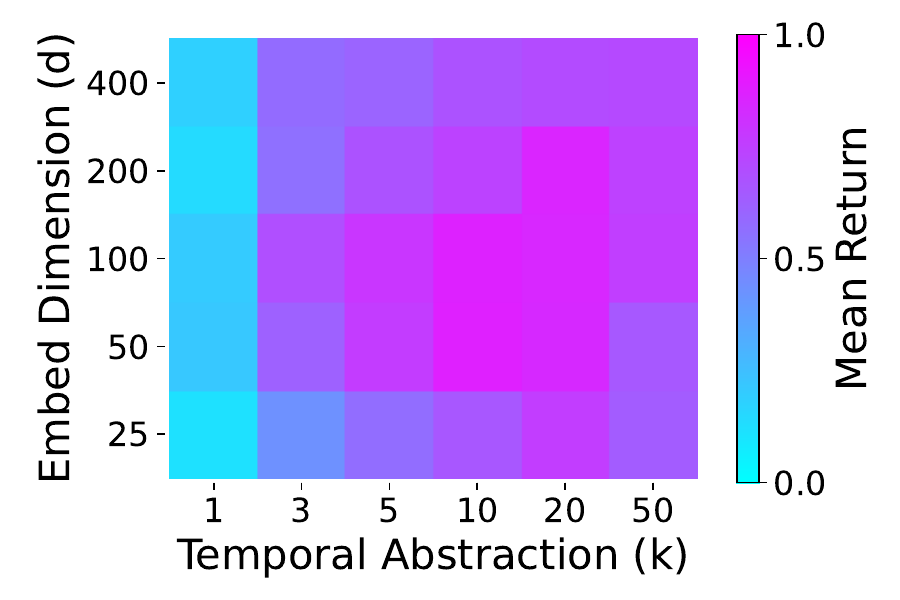}
        \caption{}
        \label{fig:diminish-zdim}
    \end{subfigure}
    \begin{subfigure}[b]{0.5\textwidth}
        \centering
        \includegraphics[width=0.9\textwidth]{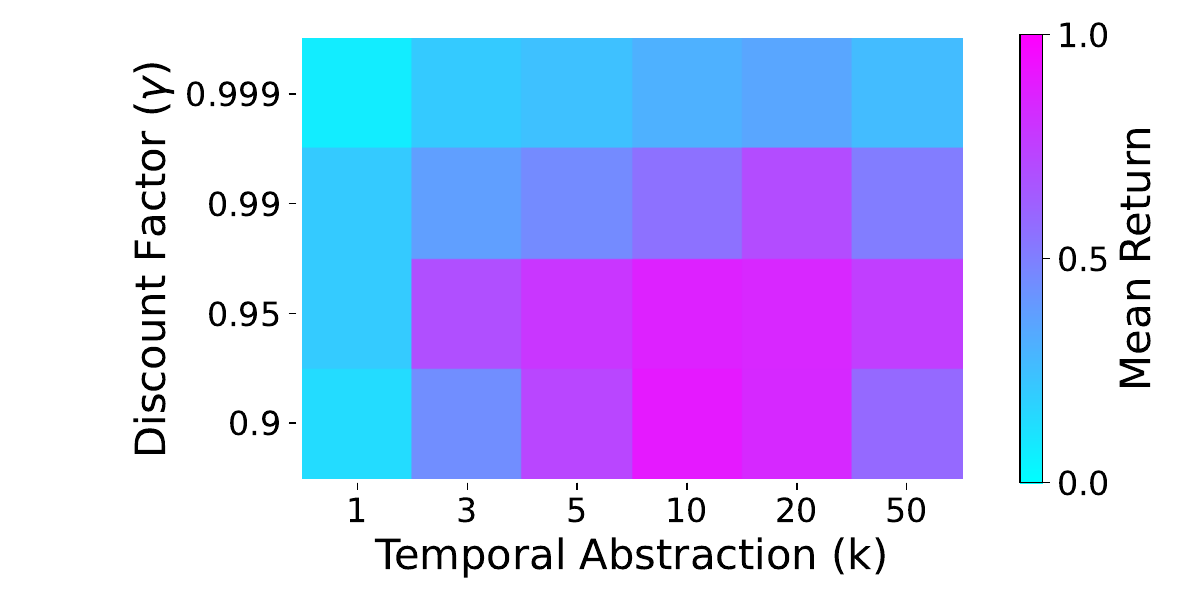}
        \caption{}
        \label{fig:diminish-gamma}
    \end{subfigure}
    \caption{\textbf{Ablation: Temporal Abstraction vs. Embedding Dimension vs. Discount Factor} Temporal abstraction offers a considerable boost in performance even with a moderate number of steps $k$. Performance is highest in the magenta region around $k\in[5,10]$ across moderate $\gamma$ and $d$. After the introduction of temporal abstraction, the performance is less sensitive to variations in the embedding dimension \textit{(a)} than to changes in the discount factor \textit{(b)}, especially as $\gamma \to 1$.}
    \label{fig:k_z_gamma_heatmap}
\end{figure}

\section{Conclusion}
\label{sec:conclusion}
We identify a key mismatch between the low-rank inductive bias of \acs{FB} and the inherently high-rank structure of the \acs{SR} in continuous domains. 
The SR exhibits a heavy spectral tail of high-frequency components that are difficult to approximate and amplify errors under bootstrapping.

We address this mismatch by introducing temporal abstraction as a spectral regulator. 
Action repetition acts analogously to a low-pass filter, attenuating high-frequency components while preserving steady-state dynamics, thereby reducing the effective rank of the SR. 
This yields a simpler and more learnable target for FB representations. 
Empirically, this spectral smoothing stabilizes learning and improves performance, even in high-discount regimes where standard FB struggles.

More broadly, our results suggest shifting focus from increasing model capacity to shaping the spectral structure of the underlying dynamics. 
Temporal abstraction serves as a practical tool for this purpose, enabling more stable and scalable predictive representations.

\section{Limitations and Future Work}
\label{sec:limitations}

While our study uses continuous maze navigation to isolate the spectral effects of temporal abstraction, several research avenues remain. First, while these environments provide a controlled testbed for analyzing effective rank, generalizing our findings to domains with complex contact dynamics—such as locomotion or dexterous manipulation—is a primary direction for future work.

Second, we focus on action repetition as a fundamental form of temporal abstraction. More sophisticated frameworks, such as options or learned skills, may induce complex spectral transformations beyond the uniform attenuation studied here. Extending our analysis to adaptive abstractions could further clarify how hierarchical structures regularize representation learning.

Third, our results highlight an inherent trade-off between spectral stability and temporal resolution. As formalized in Definition~\ref{def:repeat_error}, the smoothing that facilitates tractable learning also introduces a bias that limits resolution of high-frequency dynamics. This approach may therefore be less suitable for tasks requiring near-instantaneous reactive control.

Finally, while we consider online interaction in moderate dimensions, scaling to high-dimensional observations or offline settings \citep{sikchi2025FastAW, tirinzoni2025ZeroShotWH} presents a compelling challenge. Investigating how temporal abstraction improves spectral conditioning in fixed datasets could significantly enhance the robustness of zero-shot generalization in offline reinforcement learning.
\subsubsection*{Acknowledgments}
\label{sec:ack}
This work was funded by the Carl Zeiss Foundation through the ReScaLe project.
\subsubsection*{Broader Impact}
\label{sec:impact}
This work advances the understanding of Forward--Backward representations, a general framework with potential applications across machine learning and robotics. 
We do not identify any immediate or specific societal risks beyond those broadly associated with these fields.
\newpage
\bibliography{main}
\bibliographystyle{rlj}
\newpage
\appendix
\setcounter{equation}{0}
\renewcommand{\theequation}{A\arabic{equation}}
\renewcommand{\theHequation}{A\arabic{equation}}   

\section{Hyperparameters and Implementation Details}
\label{sec:hyperparameters}
We summarize the key hyperparameters used for training the \acf{FB} representation in Table~\ref{tab:hyperparameters}. 
These settings were kept fixed across experiments unless stated otherwise. 
The $k$-step action repetition is implemented via a wrapper over the environment. In other words, the agent will only interact with the $k$-repeat MDP and will not have access to the intermediate observations among the $k$ steps.
We use episodic return as a measure of performance for the agents. 
All environments provide a sparse (zero or one) reward.
To get the final performance or on each validation step, each model is evaluated on 50 episodes with random start and goal resets.
The latent vector $z$ describing the task or goal is sampled during training using a 50-50 mix of sampling from a normal distribution or from a random visited state in the replay buffer projected to the latent space using the backward network, similarly to \citet{tirinzoni2025ZeroShotWH}.

\begin{table}[htbp]
    \caption{Relevant hyperparameters used for training \ac{FB} representation.}
    \begin{center}
        \begin{tabular}{ll}
            \\ \hline \\
            Training steps (gradient update) & 1e6 \\
            Reward type & Sparse \\
            Hidden Layers (Forward, Backward, Actor) & [256, 256] \\
            Ensemble (Forward) & 2 \\
            Learning Rate (Backward, Actor) & 1e-6 \\
            Learning Rate (Forward) & 1e-5 \\
            Batch size (state) & 512 \\
            Batch size (image) & 128 \\
            Replay Buffer size & 1e6 \\
            FB Orthogonal Loss Coef. & 1.0 \\
            z-latent: buffer data vs. random sampling ratio & 0.5 \\
            z-latent: hold steps before resampling & 10 \\  
        \end{tabular}
    \end{center}
    \label{tab:hyperparameters}
\end{table}

\newpage
\section{Proofs}
\label{sec:proofs}

In the following, we provide the proofs of this work.
Note, that in \cite{touati2021learningone}, the reward embedding $z_R \coloneqq  B^\top r \nu$ is weighted by a data distribution $\nu$. For this work, we assume a uniform distribution and implicitly absorb its normalization constant into the scaling of $B$, simplifying the embedding to the matrix-vector product $z_R = B^\top r$.

\subsection{Optimality Gap for FB Representations}
The following theorem is a simplified refinement \cite[Theorem 8]{touati2021learningone} for our spectral analysis setting.

\OptGapThm*

\begin{proof}
Applying \cite[Theorem 8]{touati2021learningone} to our setting we have
\begin{equation}
    \left\| F(\,\cdot\,, \,\cdot\,,\ z_R)^\top z_R - Q^{\star} \right\|_\infty \leq \frac{2 \|r\|_A}{(1 - \gamma)} \sup_{s, a}\| \hat{M}^{z_R}(s, a, \,\cdot\,, \,\cdot\,) - M^{\pi_{z_R}}(s, a, \,\cdot\,, \,\cdot\,) \|_B,
\end{equation}
where the norms $\|\cdot\|_A$ on functions and $\|\cdot\|_B$ on (signed) measures must satisfy the duality compatibility $|\langle f, \mu \rangle| \leq \|f\|_A \|\mu\|_B$ for all $f, \mu$.
Choosing $\|\cdot\|_A = \|\cdot\|_\infty$ and $\|\cdot\|_B = \|\cdot\|_2$, H\"older's inequality combined with $\|\mu\|_1 \leq \sqrt{|\mathcal{S}||\mathcal{A}|}\,\|\mu\|_2$ yields $|\langle f, \mu\rangle| \leq \|f\|_\infty \|\mu\|_1 \leq C_{\mathrm{norm}}\,\|f\|_\infty\,\|\mu\|_2$ with $C_{\mathrm{norm}} = \sqrt{|\mathcal{S}||\mathcal{A}|}$. Given this, we have
\begin{align}
    \left\| F(\,\cdot\,, \,\cdot\,,\ z_R)^\top z_R - Q^{\star} \right\|_\infty 
    & \leq \frac{2\,C_{\mathrm{norm}}\,\|r\|_\infty}{(1 - \gamma)} \sup_{s, a}\| \hat{M}^{z_R}(s, a, \,\cdot\,, \,\cdot\,) - M^{\pi_{z_R}}(s, a, \,\cdot\,, \,\cdot\,) \|_2\\
    & \leq \frac{2\,C_{\mathrm{norm}}\,\|r\|_\infty}{(1 - \gamma)} \| \hat{M}^{z_R} - M^{\pi_{z_R}} \|_2.
\end{align}
This ends the proof.
\end{proof}

%
%
%
%
%
%


\subsection{Spectral Bound for Optimality Gap of k-repeat FB Representations}

We first derive the matrix form of the action-repeat policy-induced transition matrix used by both Lemma~\ref{lem:the_lemma} and Lemma~\ref{lem:eig_contraction}. 

For a fixed action $a_q$, let $P_{a_q} \in \mathbb{R}^{|\mathcal{S}| \times |\mathcal{S}|}$ denote the state-transition dynamics under that action, and let $\pi_{s_p} \in \mathbb{R}^{1 \times |\mathcal{A}|}$ represent the row vector of policy probabilities for a given state $s_p$, i.e.,
\begin{equation}
P_{a_q} = 
\begin{bmatrix}
P(s_1 \mid s_1, a_q) & \dots & P(s_{|\mathcal{S}|} \mid s_1, a_q) \\
\vdots & \ddots & \vdots \\
P(s_1 \mid s_{|\mathcal{S}|}, a_q) & \dots & P(s_{|\mathcal{S}|} \mid s_{|\mathcal{S}|}, a_q)
\end{bmatrix},\quad
\pi_{s_p} = 
\begin{bmatrix}
\pi(a_1 \mid s_p) & \dots & \pi(a_{|\mathcal{A}|} \mid s_p)
\end{bmatrix}.
\end{equation}
Stacking the per-action matrices into a block-diagonal matrix $\mathbf{P}_{\mathcal{A}}$, we can express the transition matrix $\widetilde{P}^\pi \in \mathbb{R}^{|\mathcal{S} \times \mathcal{A}| \times |\mathcal{S} \times \mathcal{A}|}$ as a product of action-repetition and policy-mapping components:
\begin{equation}\label{eq:til_ppi_comp}
\widetilde{P}^\pi =  K\,\mathbf{P}_{\mathcal{A}}^k\,E\,\tilde{\pi},
\end{equation}
where
\begin{equation*}
\mathbf{P}_{\mathcal{A}}^k =
\begin{bmatrix}
P_{a_1}^k & & 0 \\
& \ddots & \\
0 & & P_{a_{|\mathcal{A}|}}^k
\end{bmatrix} \in \mathbb{R}^{|\mathcal{S} \times \mathcal{A}| \times |\mathcal{S} \times \mathcal{A}|}
\quad\mbox{and}\quad
\tilde{\pi} = 
\begin{bmatrix}
\pi_{s_1} & & 0 \\
& \ddots &  \\
0 & & \pi_{s_{|\mathcal{S}|}}
\end{bmatrix} \in \mathbb{R}^{|\mathcal{S}| \times |\mathcal{S} \times \mathcal{A}|}.
\end{equation*}
The matrix $K \in \mathbb{R}^{|\mathcal{S} \times \mathcal{A}| \times |\mathcal{S} \times \mathcal{A}|}$ is a commutation matrix that reorders the state-action product space from a state-major to an action-major indexing scheme, and $E \in \mathbb{R}^{|\mathcal{S} \times \mathcal{A}| \times |\mathcal{S}|}$ is a broadcasting matrix that lifts a vector from $\mathbb{R}^{|\mathcal{S}|}$ to $\mathbb{R}^{|\mathcal{S} \times \mathcal{A}|}$ by replicating each state coordinate $|\mathcal{A}|$ times.
In the decomposition \eqref{eq:til_ppi_comp}, $\mathbf{P}_{\mathcal{A}}^k$ captures the $k$-step transitions under action repetition, while $\widetilde{\pi}$ maps the policy within the state-action space.
Intuitively, the system first evolves for $k$ steps under the same action, after which the next action is selected according to the policy without execution.

Before stating the lemma we introduce a diagonalizability assumption on the joint chain.

\begin{assumption}[Diagonalizability of joint dynamics]
\label{assumption:diag_joint}
The policy-induced transition matrix $\widetilde P^\pi$ of the action-repeat MDP is
diagonalizable over $\mathbb{C}$, i.e.\ $\widetilde P^\pi = S \Lambda S^{-1}$ for some
invertible matrix $S$.
\end{assumption}

This is a standard assumption in spectral analyses of Markov chains and matrix perturbation theory \citep{horn2013matrix, stewart1990matrix} and is generic: matrices with distinct eigenvalues are dense in $\mathbb{R}^{n \times n}$ \citep[Theorem \ 2.4.7.1]{horn2013matrix}. We therefore expect it to hold in essentially most discrete environments of practical interest, since diagonalizability fails only when two or more eigenvalues coincide and their eigenvectors fail to span the corresponding joint eigenspace, an exact algebraic degeneracy broken by any stochasticity or asymmetry in the transition dynamics. The tightness of the bound is controlled by $\kappa(S) = \|S\|_2\|S^{-1}\|_2$, which equals $1$ when $\widetilde P^\pi$ is normal and grows as $\widetilde P^\pi$ approaches a defective matrix.



\OptGapRepeatLem*

\begin{proof}
The proof bounds the spectral truncation error of the action-repeat successor representation $\widetilde{M}^\pi$ in terms of $|\lambda_{d+1}(\widetilde{P}^\pi)|$ (Step~1), the $k$-step reward magnitude (Step~2), and combines these into the final bound (Step~3).

\textbf{Step 1: Spectral bound of the discounted infinite horizon.}
The successor representation $\widetilde{M}^\pi$ for the action-repeat MDP is defined by the discounted sum of future transitions $\widetilde{M}^\pi \coloneqq \sum_{t=0}^{\infty} (\gamma^k \widetilde{P}^\pi)^t$. Since $\widetilde{P}^\pi$ is a row-stochastic matrix, its spectral radius satisfies $\rho(\widetilde{P}^\pi) = 1$ and thus with discounting we have $\rho(\gamma^k \widetilde{P}^\pi) = \gamma^k < 1$. Thus, the Neumann series converges to the matrix inverse:
\begin{equation}\label{eq:neumann_series}
    \widetilde{M}^\pi = (I - \gamma^k \widetilde{P}^\pi)^{-1}.
\end{equation}

Since $\widetilde{P}^\pi = S \Lambda S^{-1}$ (Assumption~\ref{assumption:diag_joint}), the successor representation of the action-repeat MDP (Definition~\ref{def:action_repeat_mdp}, with discount $\gamma^k$) is
\begin{equation}
\widetilde{M}^\pi = (I - \gamma^k \widetilde{P}^\pi)^{-1} = S\,(I - \gamma^k \Lambda)^{-1}\, S^{-1} = S\,\mathrm{diag}\!\left(\frac{1}{1 - \gamma^k \lambda_i}\right) S^{-1}.
\end{equation}

Let $\Phi = (I - \gamma^k \Lambda)^{-1} = \mathrm{diag}(1/(1 - \gamma^k \lambda_i))$. 
By submultiplicativity of singular values \citep{horn2013matrix}:
\begin{equation}
\sigma_{d+1}(\widetilde{M}^\pi) = \sigma_{d+1}(S \Phi S^{-1}) 
  \leq \|S\|_2 \cdot \sigma_{d+1}(\Phi) \cdot \|S^{-1}\|_2 
  = \kappa(S) \cdot \sigma_{d+1}(\Phi).
\end{equation}
Let $\lambda_i = \lambda_i(\widetilde{P}^\pi)$ be the $i$-th eigenvalue of 
$\widetilde{P}^\pi$, ordered by nonincreasing modulus. Since $\Phi$ is diagonal, its 
singular values are the moduli of its diagonal entries, which we denote $\phi_i \coloneqq 1/|1 - \gamma^k \lambda_i|$ for $i=1,\dotsc,n$.
Note that, as $\lambda_i$ may be complex the sequence of $\phi_i$ is not sorted by modulus and we have in general $\sigma_i(\Phi) \neq \phi_i$ but $\sigma_j(\Phi) = [(j\text{-th largest} \{ \phi_i\}]$.
Since $\widetilde{P}^\pi$ is row-stochastic we have $|\lambda_i| \leq 1$, so $\gamma^k |\lambda_i| < 1$. Applying the reverse triangle inequality to each denominator gives the pointwise bound
\begin{equation}
\phi_i = \frac{1}{|1 - \gamma^k \lambda_i|} \;\leq\; \frac{1}{1 - \gamma^k |\lambda_i|} \;=:\; \tilde{\phi}_i,
\end{equation}
where the dominating sequence $\tilde{\phi}_i$ is monotone increasing in $|\lambda_i|$. Since $|\lambda_i|$ is sorted nonincreasingly, the $(d{+}1)$-th largest element of $\{\tilde{\phi}_i\}_{i=1}^n$ is exactly $\tilde{\phi}_{d+1} = 1/(1-\gamma^k|\lambda_{d+1}|)$.

Combining the pointwise bound $\phi_i \leq \widetilde\phi_i$ with the monotonicity of $\widetilde\phi_i$ in $|\lambda_i|$, we obtain
\begin{equation}
\sigma_{d+1}(\Phi) 
  \;= \big[ (d{+}1)\text{-th largest of } {\{\phi_i\}} \big] \;\leq\; \big[(d{+}1)\text{-th largest of } \{\widetilde\phi_i\}\big]
  \;=\; \widetilde\phi_{d+1}
  \;=\; \frac{1}{1-\gamma^k|\lambda_{d+1}|}.
\end{equation}

Combining with the submultiplicativity bound, the SVD truncation error can be bounded by
\begin{equation*}
\sigma_{d+1}(\widetilde{M}^\pi) 
  \leq \kappa(S) \cdot \sigma_{d+1}(\Phi) 
  \leq \frac{\kappa(S)}{1 - \gamma^k |\lambda_{d+1}(\widetilde{P}^\pi)|}.
\end{equation*}

\textbf{Step 2: Bound reward $\widetilde{r}$.}
By definition, the expected $k$-step reward is given by $\widetilde{r}(s, a) = \mathbb{E}_P\left[\sum_{t=0}^{k-1} \gamma^t r(s_t, a) \mid s_0 = s\right]$. 
Bounding the reward at each step with $\|r\|_\infty$, we get:
\begin{equation}\label{eq:reward_bound}
\|\widetilde{r}\|_\infty \leq \sum_{t=0}^{k-1} \gamma^t \|r\|_\infty = \frac{1 - \gamma^k}{1 - \gamma} \|r\|_\infty.
\end{equation}

\textbf{Step 3: Complete the proof.}
Using the triangle inequality and Definition~\ref{def:repeat_error}, we first separate the action-repeat approximation error:
\begin{align}
 \left\| F(\,\cdot\,, \,\cdot\,, z_{\widetilde{r}})^\top z_{\widetilde{r}} - Q^{\star} \right\|_\infty
  & \leq \left\| F(\,\cdot\,, \,\cdot\,, z_{\widetilde{r}})^\top z_{\widetilde{r}} - \widetilde{Q}^{\star} \right\|_\infty + \| \widetilde{Q}^\star - Q^\star \|_\infty \nonumber \\
  & = \left\| F(\,\cdot\,, \,\cdot\,, z_{\widetilde{r}})^\top z_{\widetilde{r}} - \widetilde{Q}^{\star} \right\|_\infty + \epsilon_\mathrm{repeat}(k). 
\intertext{Next, applying Theorem~\ref{theorem:opt_gap} to the first term and substituting our reward bound from Step 2:}
  & \leq \epsilon_\mathrm{repeat}(k) + \frac{2 \,C_\mathrm{norm}\, \|\widetilde{r}\|_\infty}{1 - \gamma^k} \| \hat{M}^{z_{\widetilde{r}}} - \widetilde{M}^{\pi_{z_{\widetilde{r}}}} \|_2 \nonumber \\
  & \leq \epsilon_\mathrm{repeat}(k) + \frac{2 \,C_\mathrm{norm}\, \|r\|_\infty}{1 - \gamma} \| \hat{M}^{z_{\widetilde{r}}} - \widetilde{M}^{\pi_{z_{\widetilde{r}}}} \|_2.
\intertext{Finally, bounding the model error by the realizability error (Definition~\ref{def:real_error}) and the spectral truncation properties established in Step~1:}
  & \leq \epsilon_\mathrm{repeat}(k) + \frac{2 \,C_\mathrm{norm}\, \|r\|_\infty}{1 - \gamma} \left( \widetilde{\epsilon}_\mathrm{real}(\widetilde{r}) + \sigma_{d + 1}(\widetilde{M}^{\pi_{z_{\widetilde{r}}}}) \right) \nonumber \\
  & \leq \epsilon_\mathrm{repeat}(k) + \frac{2 \,C_\mathrm{norm}\, \|r\|_\infty}{1 - \gamma} \left( \widetilde{\epsilon}_\mathrm{real}(\widetilde{r}) + \frac{\kappa(S)}{1 - \gamma^k |\lambda_{d+1}(\widetilde{P}^\pi)|}\right).
\end{align}
Defining $C_\mathrm{SF} \coloneqq \kappa(S)$ completes the proof.
\end{proof}

\subsection{Eigenvalue Contraction under Action Repetition}
\label{subsec:eig_contraction}
Lemma~\ref{lem:the_lemma} bounds the truncation error in terms of $|\lambda_{d+1}(\widetilde P^\pi)|$, but does not say how this eigenvalue depends on the action-repeat horizon $k$. We now show that, under a structural condition on the per-action transition matrices, $|\lambda_{d+1}(\widetilde P^\pi)|$ contracts when increasing $k$, exposing the explicit role of action repetition. The argument works from the block-diagonal factorization \eqref{eq:til_ppi_comp} and requires diagonalizability of each block.

\begin{assumption}[Diagonalizability of action blocks]
\label{assumption:diag_blocks}
For all actions $a \in \mathcal{A}$ the corresponding state transition matrix $P_a$ is
diagonalizable over $\mathbb{C}$, i.e.\ $P_a = U_a \Lambda_a U_a^{-1}$.
\end{assumption}

As with Assumption~\ref{assumption:diag_joint}, this is generic: matrices with distinct eigenvalues are dense in $M_n(\mathbb{R})$ \citep[Thm.\ 2.4.7.1]{horn2013matrix}. Note that diagonalizing the per-action blocks $P_a$ is a stronger requirement than diagonalizing the joint matrix $\widetilde P^\pi$, since the proof relies on the block-diagonal eigendecomposition of $\mathbf{P}_\mathcal{A}$ being matched with $\widetilde\pi$.

\EigContractionLem*

\begin{proof}
We work from the factorization \eqref{eq:til_ppi_comp}.
Per Assumption~\ref{assumption:diag_blocks}, each transition matrix $P_{a}$ is diagonalizable as $P_{a} = U_{a} \Lambda_{a} U_{a}^{-1}$, so $\mathbf{P}_{\mathcal{A}} = U \Lambda_{\mathcal{A}} U^{-1}$ with the block-diagonal eigenvector and eigenvalue matrices
\begin{equation*}
U = \mathrm{diag}(U_{a_1}, \dots, U_{a_{|\mathcal{A}|}}), \quad \Lambda_\mathcal{A} = \mathrm{diag}(\Lambda_{a_1}, \dotsc, \Lambda_{a_{|\mathcal{A}|}}).
\end{equation*}
Substituting into \eqref{eq:til_ppi_comp} and raising to the $k$-th power gives
\begin{equation}
\widetilde{P}^\pi = K U \Lambda_{\mathcal{A}}^k U^{-1} E \tilde{\pi}.
\end{equation}
To extract the $(d+1)$-th eigenvalue, we partition the 1-step spectrum $\Lambda_{\mathcal{A}}$ into the $d$ largest eigenvalues ($\Lambda_{\mathrm{slow}}$) and the remaining fast-mixing eigenvalues ($\Lambda_{\mathrm{fast}}$):
\begin{equation}
\Lambda_{\mathcal{A}} = \begin{pmatrix} \Lambda_{\mathrm{slow}} & 0 \\ 0 & \Lambda_{\mathrm{fast}} \end{pmatrix}.
\end{equation}
By definition, the largest absolute value in $\Lambda_{\mathrm{fast}}$ is exactly $|\lambda_{d+1}(\mathbf{P}_{\mathcal{A}})|$. We now decompose the full system into a rank-$d$ matrix ($M_d$) and a fast-mixing error matrix ($E_k$):
\begin{equation}
\widetilde{P}^\pi = \underbrace{K U \begin{pmatrix} \Lambda_{\mathrm{slow}}^k & 0 \\ 0 & 0 \end{pmatrix} U^{-1} E \tilde{\pi}}_{\eqqcolon M_d} + \underbrace{K U \begin{pmatrix} 0 & 0 \\ 0 & \Lambda_{\mathrm{fast}}^k \end{pmatrix} U^{-1} E \tilde{\pi}}_{\eqqcolon E_k}.
\end{equation}

Let $S$ be the eigenvector matrix of $\widetilde{P}^\pi$ (Assumption~\ref{assumption:diag_joint} is invoked here only to make $\kappa(S)$ well-defined; the contraction itself relies only on Assumption~\ref{assumption:diag_blocks}).
To bound the $(d+1)$-th eigenvalue of the system, we rely on global eigenvalue matching bounds for diagonalizable matrices. Let $N = |\mathcal{S}||\mathcal{A}|$ be the dimension of the space. Theorem 3.3 in \cite{stewart1990matrix} establishes the existence of an optimal permutation $\tau^\star$ matching the spectra of $\widetilde{P}^\pi$ and $M_d$ that minimizes the maximum deviation between paired eigenvalues:
\begin{equation}
\min_\tau \max_i |\lambda_i(\widetilde{P}^\pi) - \lambda_{\tau(i)}(M_d)| \leq (2N - 1) \kappa(S)\|E_k\|_2.
\end{equation}
Because $M_d$ has rank at most $d$, it possesses at least $N-d$ zero eigenvalues, so $M_d$ has at most $d$ nonzero eigenvalues.
By the pigeonhole principle applied to the top $d{+}1$ eigenvalues of $\widetilde{P}^\pi$ (those with modulus nonsmaller than $|\lambda_{d+1}(\widetilde{P}^\pi)|$), at least one index $i^\star \in \{1,\dots,d{+}1\}$ must satisfy $\lambda_{\tau^\star(i^\star)}(M_d) = 0$. Combining the matching bound at $i^\star$ with the modulus ordering:
\begin{equation}
|\lambda_{d+1}(\widetilde{P}^\pi)| \;\leq\; |\lambda_{i^\star}(\widetilde{P}^\pi)| \;=\; |\lambda_{i^\star}(\widetilde{P}^\pi) - \lambda_{\tau^\star(i^\star)}(M_d)| \;\leq\; (2N - 1) \kappa(S)\|E_k\|_2,
\end{equation}
where the first inequality holds because $i^\star \leq d{+}1$ and the eigenvalues are ordered by nonincreasing modulus.

Using the submultiplicativity of the spectral norm, we bound $\|E_k\|_2$:
\begin{align}
\|E_k\|_2 &\leq \|K\|_2 \cdot \|U\|_2 \cdot \|\Lambda_{\mathrm{fast}}^k\|_2 \cdot \|U^{-1}\|_2 \cdot \|E \tilde{\pi}\|_2 \\
&\leq 1 \cdot \kappa(U) \cdot |\lambda_{d+1}(\mathbf{P}_{\mathcal{A}})|^k \cdot \sqrt{|\mathcal{A}|},
\end{align}
where we use the fact that $K$ is a permutation matrix ($\|K\|_2 = 1$), $\kappa(U) = \|U\|_2 \|U^{-1}\|_2$, and we bound $\|E \tilde{\pi}\|_2 \leq \|E\|_2 \leq \sqrt{|\mathcal{A}|}$. Substituting this back yields the explicit bound on the $(d+1)$-th eigenvalue of the system:
\begin{equation}
|\lambda_{d+1}(\widetilde{P}^\pi)| \leq \underbrace{(2|\mathcal{S}||\mathcal{A}| - 1) \kappa(S) \kappa(U) \sqrt{|\mathcal{A}|}}_{\eqqcolon C_{\mathrm{rep}}} \; \cdot \; |\lambda_{d+1}(\mathbf{P}_{\mathcal{A}})|^k,
\end{equation}
which is the claim of the lemma.
\end{proof}

A structural artifact of the block-diagonal factorization \eqref{eq:til_ppi_comp} is that each block $P_{a}$ is row-stochastic, so $\mathbf{P}_\mathcal{A}$ has at least $|\mathcal{A}|$ unit eigenvalues. Consequently, $|\lambda_{d+1}(\mathbf{P}_\mathcal{A})| < 1$ requires $d \geq |\mathcal{A}|$; otherwise Lemma~\ref{lem:eig_contraction} is vacuous because $|\lambda_{d+1}(\mathbf{P}_\mathcal{A})|^k = 1$ for all $k$.
This constraint is specific to the block diagonal proof strategy used here.

Combining this with Lemma~\ref{lem:the_lemma} via direct substitution yields the explicit form of the spectral error term as
\begin{equation*}
    \frac{C_{\mathrm{SF}}}{1 - \gamma^k C_{\mathrm{rep}} |\lambda_{d+1}(\mathbf{P}_\mathcal{A})|^k},
\end{equation*}
which is meaningful whenever $C_{\mathrm{rep}} |\lambda_{d+1}(\mathbf{P}_\mathcal{A})|^k < \gamma^{-k}$.
Under stronger structural assumptions such as orthogonality of the action block matrices $P_a$ the same proof strategy would potentially tighten $C_\mathrm{rep}$ to $\sqrt{|\mathcal{A}|}$.

\newpage
\section{Spectral Metrics}
\label{sec:spectral_metrics}

We define two spectral metrics to quantify the effective rank of the \ac{SR}.

\paragraph{Stable Rank.}
The stable rank captures the concentration of spectral energy relative to the dominant singular direction. 
For a matrix $M$, it is defined as:
\[
\mathrm{SRank}(M) = \frac{\|M\|_F^2}{\|M\|_2^2}
= \frac{\sum_i \sigma_i^2}{\sigma_1^2},
\]
where $\{\sigma_i\}$ are the singular values of $M$. 
Lower values indicate stronger concentration in leading components and thus greater low-rank structure.

\paragraph{Normalized Spectral Entropy.}
Normalized spectral entropy (NSE) measures how evenly spectral energy is distributed:
\[
\mathrm{NSE}(M) = \frac{-\sum_i p_i \log p_i}{\log(\beta)}, 
\quad 
p_i = \frac{\sigma_i^2}{\sum_j \sigma_j^2},
\]
where $\beta$ denotes the number of singular values. 
This normalization ensures $\mathrm{NSE}(M) \in [0,1]$. 
Higher values correspond to a more diffuse spectrum, while lower values indicate concentration in a few modes.

\paragraph{Discrete Setting.}
In discrete environments, we compute both metrics directly on the exact \ac{SR} matrix $M^{\pi}$ by performing singular value decomposition (SVD) to obtain $\{\sigma_i\}$.

\paragraph{Continuous Setting.}
In continuous domains, where the exact \ac{SR} is unavailable, we evaluate the metrics on an empirical approximation 
$\hat{M}^{\pi} = F B^\top$, constructed from transitions collected under the same exploration protocol used during training. 
In contrast to training—where a batch of latent embeddings is sampled—we use a single randomly sampled latent embedding shared across all transitions, yielding an estimate of $\hat{M}^{\pi}$ for a fixed (random) goal.

To mitigate scale drift arising from variations in embedding norms, we apply a row-wise softmax normalization such that each row sums to $\frac{1}{1-\gamma}$. 
We then compute the singular values of the normalized matrix via SVD and evaluate the spectral metrics as in the discrete case.
All reported results correspond to averages of the metrics over all trained models using random seeds (see Figure~\ref{fig:spectral_analysis}).

\newpage
\section{Extra Plots and Ablations}
\label{sec:extra_plots}

\subsection{Overall effects of $k$, $\gamma$, and $d$}
\label{subsec:overall_parallel}
Figure~\ref{fig:supplement-parallel_plot} provides an overview of the effect of the three main hyperparameters of \ac{FB} on final episodic return of the Four-Rooms continuous environment. Two values are of particular importance, action-repetition ($k$=1) and nominal discount factor ($\gamma$=0.999). In both cases, the performance suffers significantly regardless of the values of other hyperparameters. In the case of $k$=1 or no temporal abstraction, FB networks find it challenging to learn a good representation due to the presence of unpredictable high-frequency dynamical modes. In the case high discount factor, $\gamma$=0.999, a good representation cannot be achieved as the representation rank approaches singularity. The learning is less sensitive overall to the value of the embedding dimension $d$. 

\begin{figure}[h]
    \begin{center}
        \includegraphics[scale=0.5]{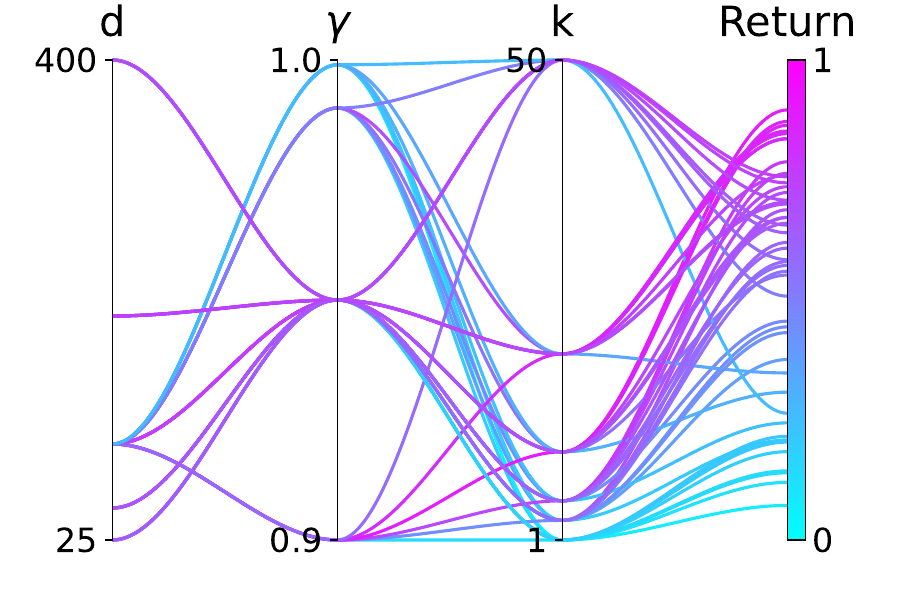}
    \end{center}
    \caption{
    \textbf{Relationship between the main hyperparameters and episodic return.} This plot gives an overview of the effect of different combinations of embedding dimension ($d$), discount factor ($\gamma$), and temporal abstraction $k$ over all experiments. Noticeably, $k=1$ or $\gamma=0.999$ leads to poor performance in most combinations.
    }
    \label{fig:supplement-parallel_plot}
\end{figure}

\newpage
\subsection{Training plots: Ablation of $k$, $\gamma$, and $d$}
\label{subsec:traiing_ablation}
Figure~\ref{fig:supplement-training}, shows the performance of different combination of the main hyperparameters during training with the focus on the effect of introducing temporal abstraction.  Figures~\ref{fig:training-vary-z} and \ref{fig:training-vary-gamma} highlight that without temporal abstraction ($k$=1) varying the embedding dimension $d$ or the discount factor $\gamma$ yields no significant improvement. Figure~\ref{fig:training-vary-k}, on the other hand shows that even a small level of temporal abstraction ($k$=3) can lead to a significant boost in performance. The figure also shows the limitation of the temporal abstraction where a large temporal abstraction ($k$=50) can start to have negative impact on the performance, by oversimplification of the SR representation and removing dynamical modes that are useful for the navigation task.

\begin{figure}[h]
    \centering
    \begin{subfigure}[b]{0.50\textwidth}
        \centering
        \includegraphics[width=0.99\textwidth]{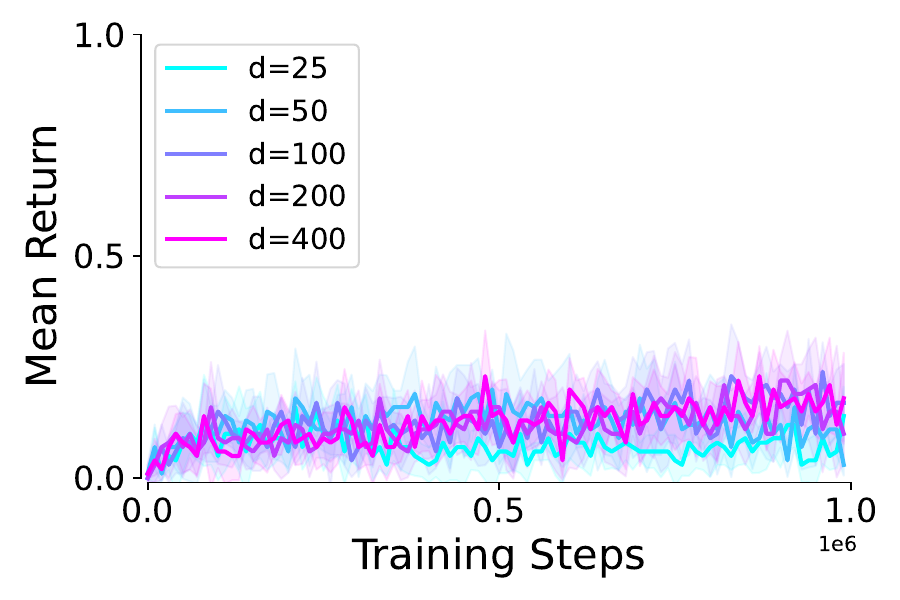}
        \caption{$\gamma$=0.95, $k$=1}
        \label{fig:training-vary-z}
    \end{subfigure}
    \begin{subfigure}[b]{0.50\textwidth}
        \centering
        \includegraphics[width=0.99\textwidth]{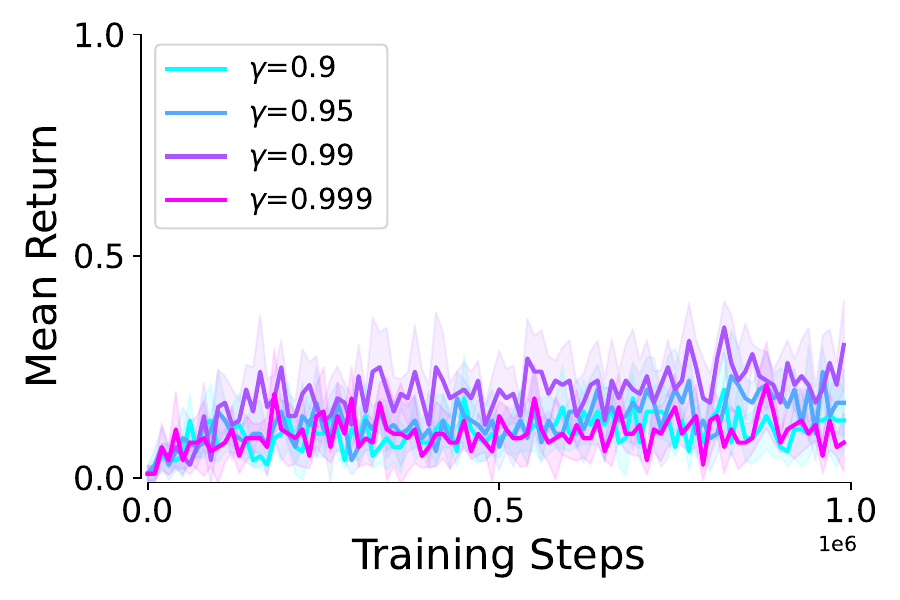}
        \caption{$d$=100, $k$=1}
        \label{fig:training-vary-gamma}
    \end{subfigure}
    \begin{subfigure}[b]{0.50\textwidth}
        \centering
        \includegraphics[width=0.99\textwidth]{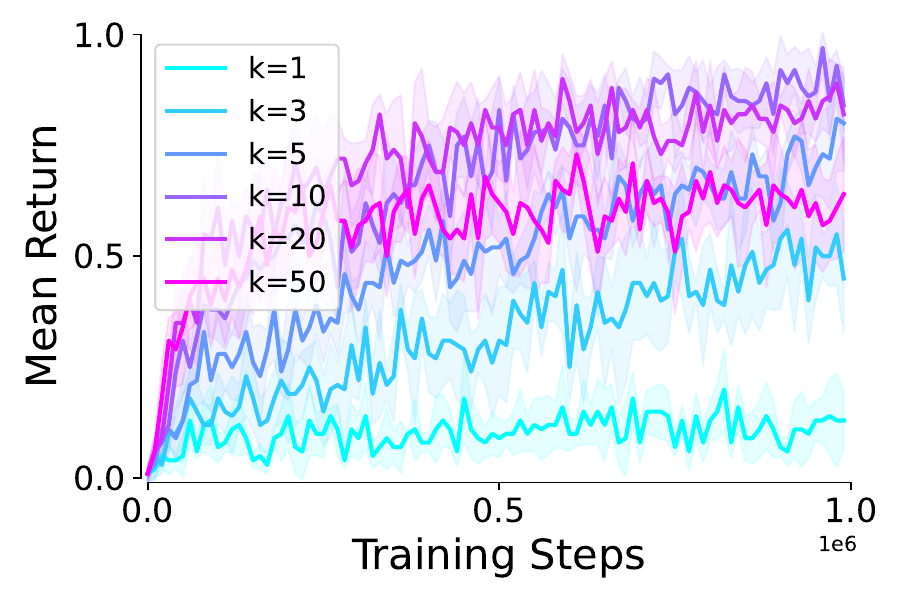}
        \caption{$d$=100, $\gamma$=0.95}
        \label{fig:training-vary-k}
    \end{subfigure}
    \caption{\textbf{Ablation: Training plots.} Increasing embedding dimension $d$ or discount factor $\gamma$ without increasing the temporal abstraction does not yield a meaningful increase in performance.}
    \label{fig:supplement-training}
\end{figure}

\newpage
\subsection{SR and its Q-function for discrete and continuous Four-Rooms environment}
\label{subsec:sr_and_q_complete}
Figure~\ref{fig:cover_full} shows a more complete picture of SR and its associated Q function (mean over cardinal action directions).
The \textit{Baseline} shows the SR and Q using no temporal abstraction ($k$=1), and moderate discount factor ($\gamma$=0.95). 
For the continuous settings SR is calculated via FB with embedding dimension ($d$=100). 

In discrete settings (top two rows),  a low-rank structure can be achieved in three ways: 1) SVD with a small rank \textit{(rank = 4)}, 2) high discount factor \textit{($\gamma$=0.999)}, or 3) using temporal abstraction via action repetition \textit{($k$=10)}. In the absence of function approximation and bootstrapping all three paths lead to an overall similar result where a low-rank structure can remove the high frequency dynamical modes and create shared future topology (rooms, corridors,...) where states with similar reachability are grouped together and have similar values.

In continuous settings (bottom two rows), where SR and its associated Q are learned via FB using function approximation and bootstrapping, the results differ. To enforce a low-rank structure via the FB algorithm we reduce the embedding dimension from 100 to 25. The figure shows a small smoothing (grouping of states), but this is not nearly close to the effect of enforcing low-rank structure using SVD in the discrete setting. Increasing the discount factor ($\gamma$=0.999) and introducing temporal abstraction via action repetition ($k$=10) show more promise as they both help spread the SR and Q values to the neighboring rooms. However, a closer look at the Q values shows that only temporal abstraction can smoothly distribute the Q values as the states move away from the goal (start marker). The policy based on increased $\gamma$ will be stuck in local maxima while the policy based on increased $k$ can follow the Q gradients to the goal. 

\begin{figure}[h]
    \begin{center}
        \includegraphics[scale=0.7]{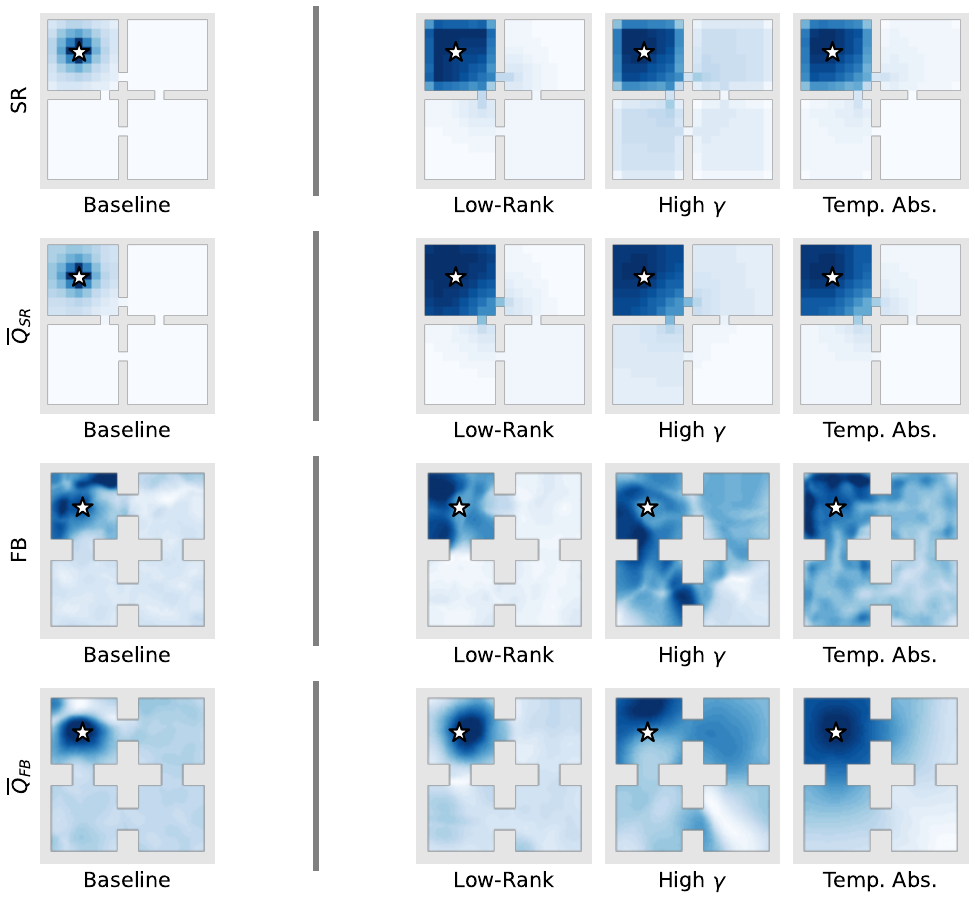}
    \end{center}
    \caption{
    \textbf{Successor Representation (SR) and its Q-Function - Discrete and Continuous} This is a more complete picture of Figure~\ref{fig:cover}. The Q-functions are derived from the same SR that is presented here. The \textit{star marker} marks the starting state for SR and the goal state for the Q-function.
    }
    \label{fig:cover_full}
\end{figure}

\newpage
\subsection{Absolute vs Relative Bellman Error}
\label{subsec:abs_vs_rel_bellman}
Figure~\ref{fig:normalized_bellman_ablations} presents the normalized Bellman error corresponding to Figure~\ref{fig:bellman_ablations}, where the residuals are scaled by the magnitude of the $Q$-values. 
We observe that increasing the embedding dimension has a negligible effect on the relative Bellman error, whereas increasing the degree of temporal abstraction consistently reduces it.

This reduction becomes more pronounced as the discount factor $\gamma$ increases, in contrast to the trend observed for the absolute Bellman error. 
Overall, the results reveal a clear divergence between these metrics at large $\gamma$: the relative Bellman error decreases, while the episodic return simultaneously deteriorates (Figure~\ref{fig:k_gamma_ablation_barplot}).

We hypothesize that this discrepancy is driven by the growth of the absolute Bellman error. 
As $\gamma \to 1$, the scale of the successor representation increases proportionally to the effective horizon, $(1 - \gamma)^{-1}$, which artificially attenuates the normalized error. 
However, optimization is governed by the absolute Bellman residual. 
Thus, larger absolute errors at high $\gamma$ lead to increased gradient variance and a weaker contraction effect, resulting in training instability. 

These findings suggest that the absolute Bellman error is a more reliable indicator of policy degradation than its normalized counterpart, as it more faithfully captures the intrinsic difficulty of function approximation in long-horizon regimes.

\begin{figure}[h]
    \centering
        \begin{subfigure}[b]{0.32\textwidth}
            \centering
            \includegraphics[width=0.9\textwidth]{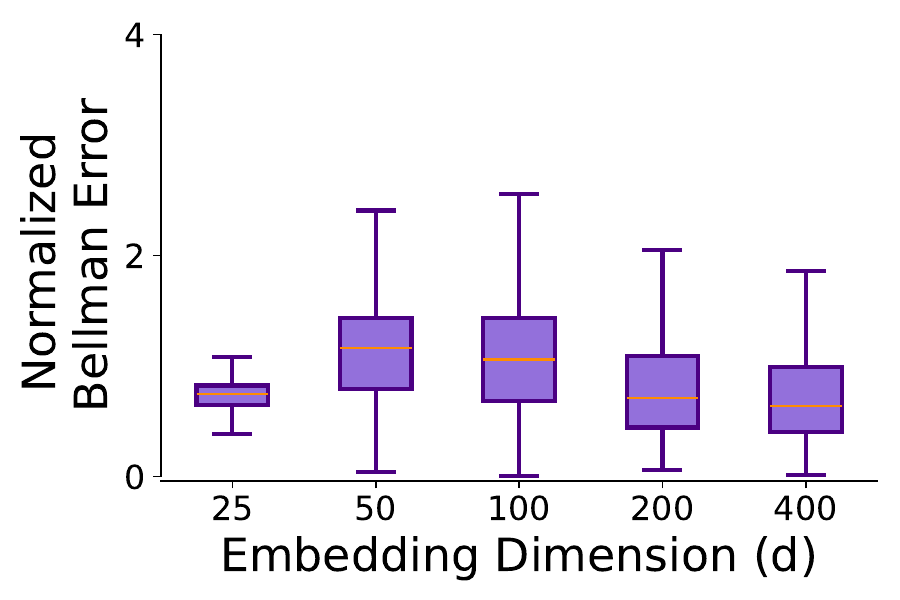}
            \caption{}
            \label{fig:normalized_bellman_vary_z}
        \end{subfigure}
        \begin{subfigure}[b]{0.32\textwidth}
            \centering
            \includegraphics[width=0.9\textwidth]{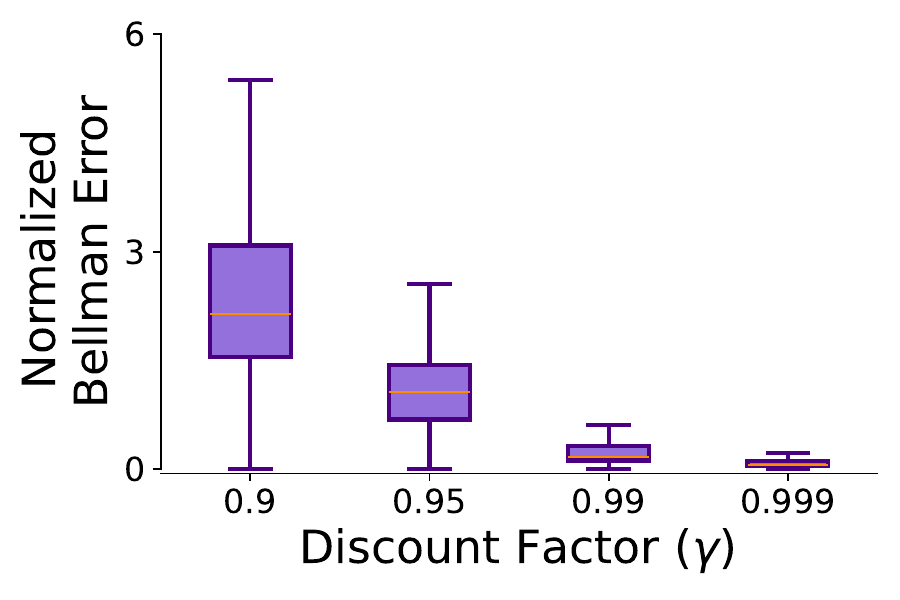}
            \caption{}
            \label{fig:normalized_bellman_vary_gamma}
        \end{subfigure}
        \begin{subfigure}[b]{0.32\textwidth}
            \centering
            \includegraphics[width=0.9\textwidth]{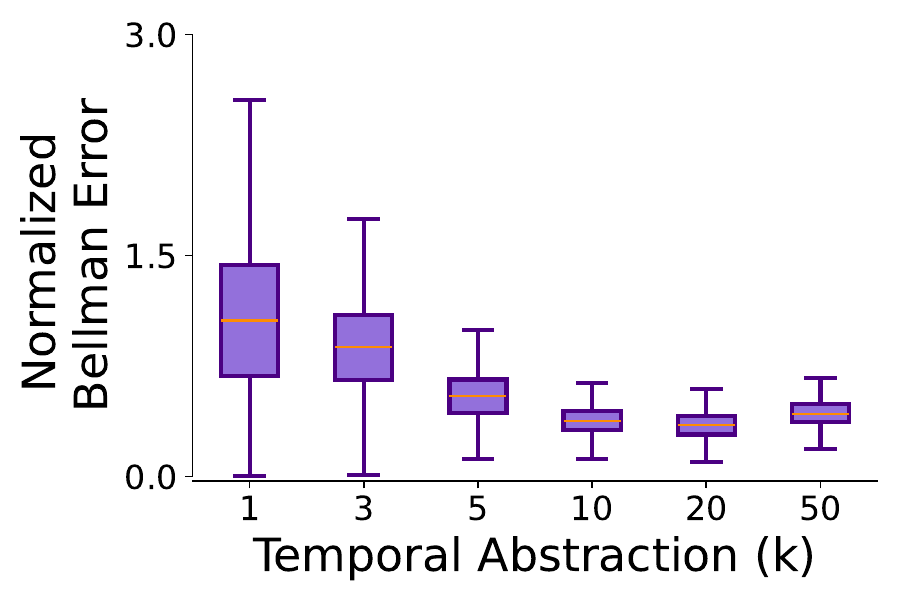}
            \caption{}
            \label{fig:normalized_bellman_vary_k}
        \end{subfigure}
    \caption{\textbf{Ablations: Normalized Bellman error.} Bellman errors are normalized by the Q values. The relative Bellman error decreases sharply as the discount factor is increased  (b). However, this decrease in relative Bellman error does not translate to better performance (episodic return) as discussed in Section~\ref{sec:experiments}.
    }
    \label{fig:normalized_bellman_ablations}
\end{figure}

\newpage
\section{Exploration Coverage}
\label{sec:coverage}
In order to verify that increasing action repetition did not significantly influence the exploration coverage of the state space, we plot the states visited during the training of agents with varying action repetition values for the LargeMaze environment. We use one interaction in the $k$-repeat environment per training step, hence each agent visits one million states during its training.

\begin{figure}[h]
    \centering
        \begin{subfigure}[h]{0.3\textwidth}
            \centering
            \includegraphics[width=0.9\textwidth]{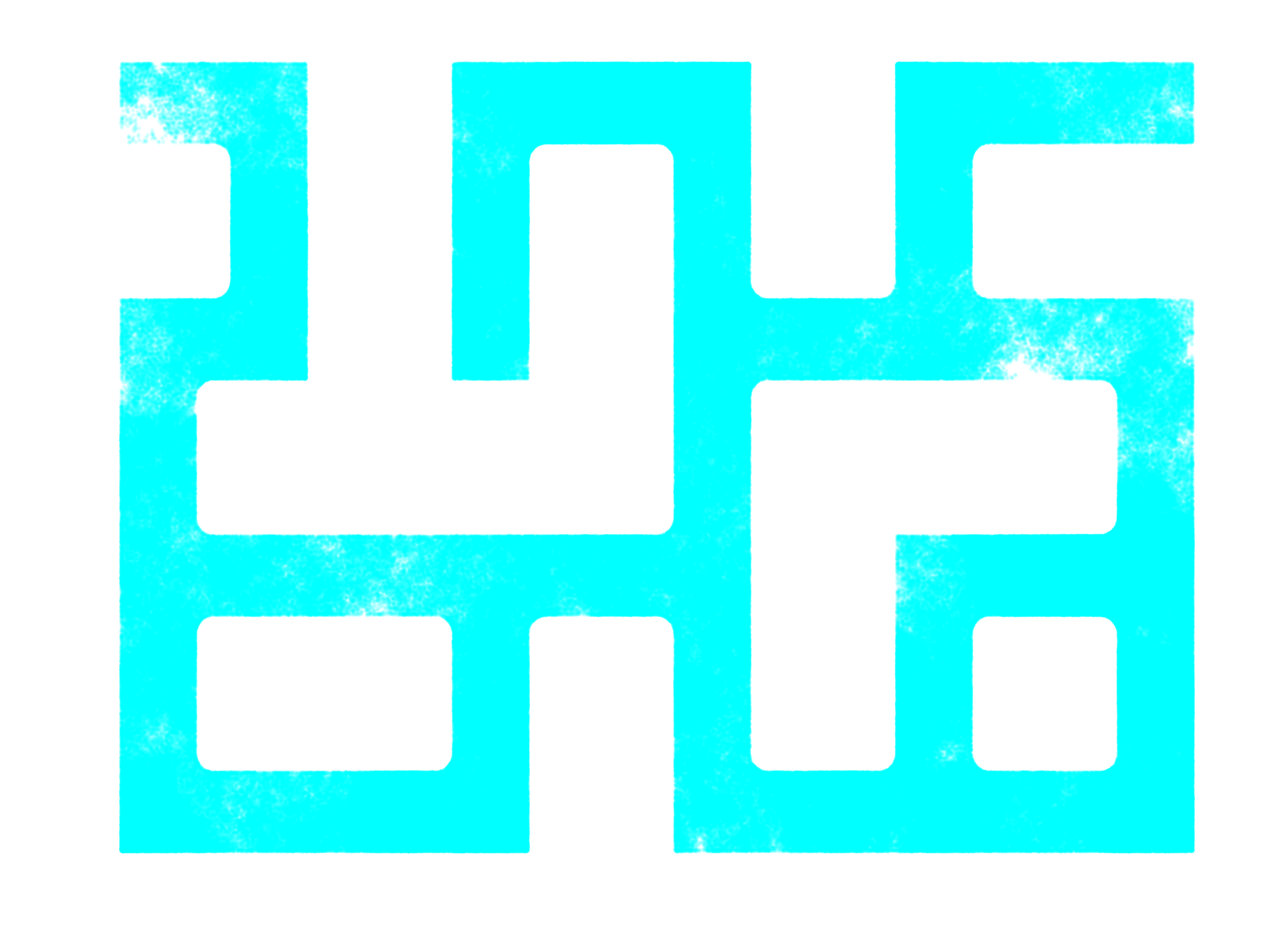}
            \caption{$k=1$}
            \label{}
        \end{subfigure}
        \begin{subfigure}[h]{0.3\textwidth}
            \centering
            \includegraphics[width=0.9\textwidth]{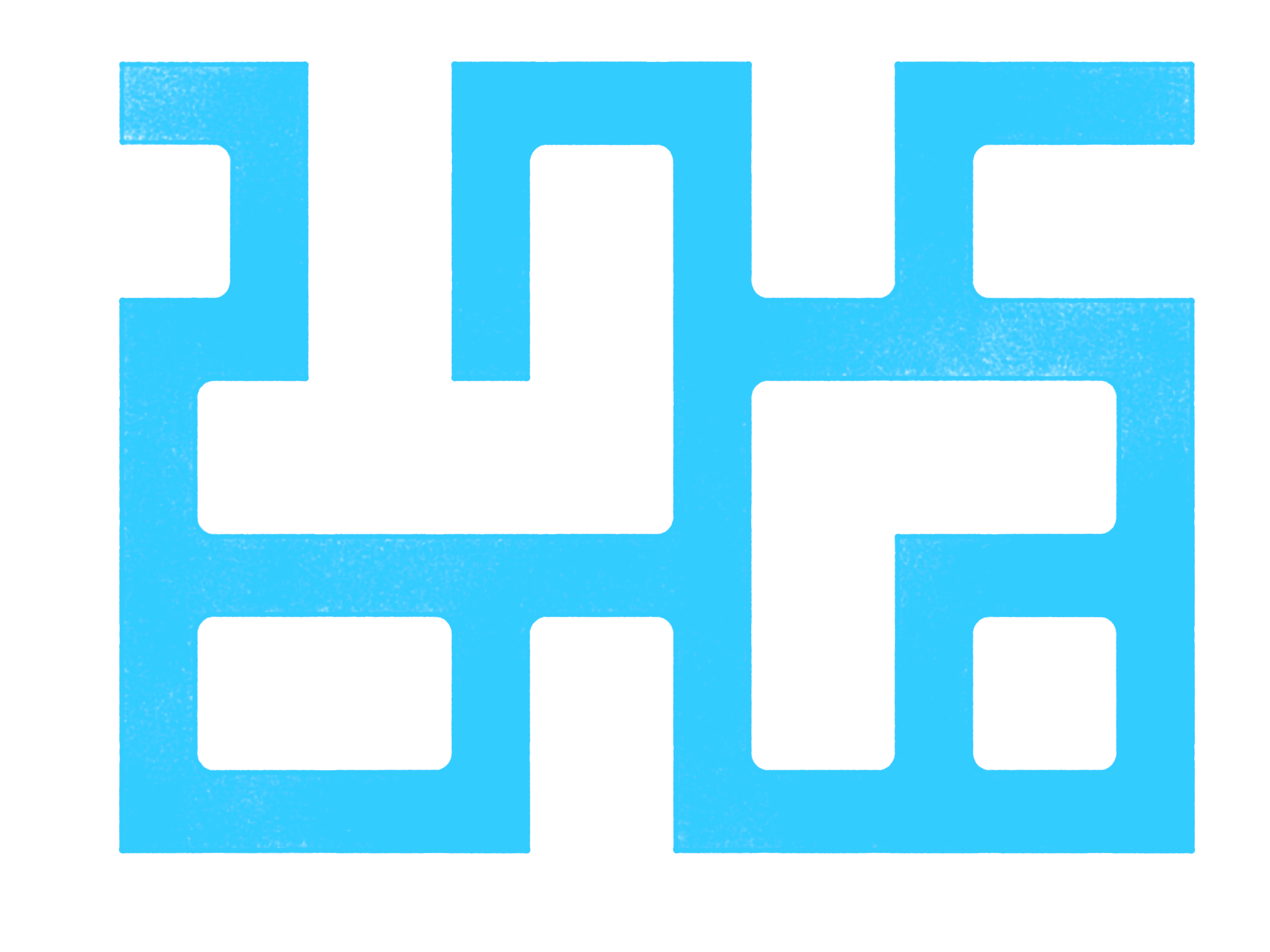}
            \caption{$k=3$}
            \label{}
        \end{subfigure}
        \begin{subfigure}[h]{0.3\textwidth}
            \centering
            \includegraphics[width=0.9\textwidth]{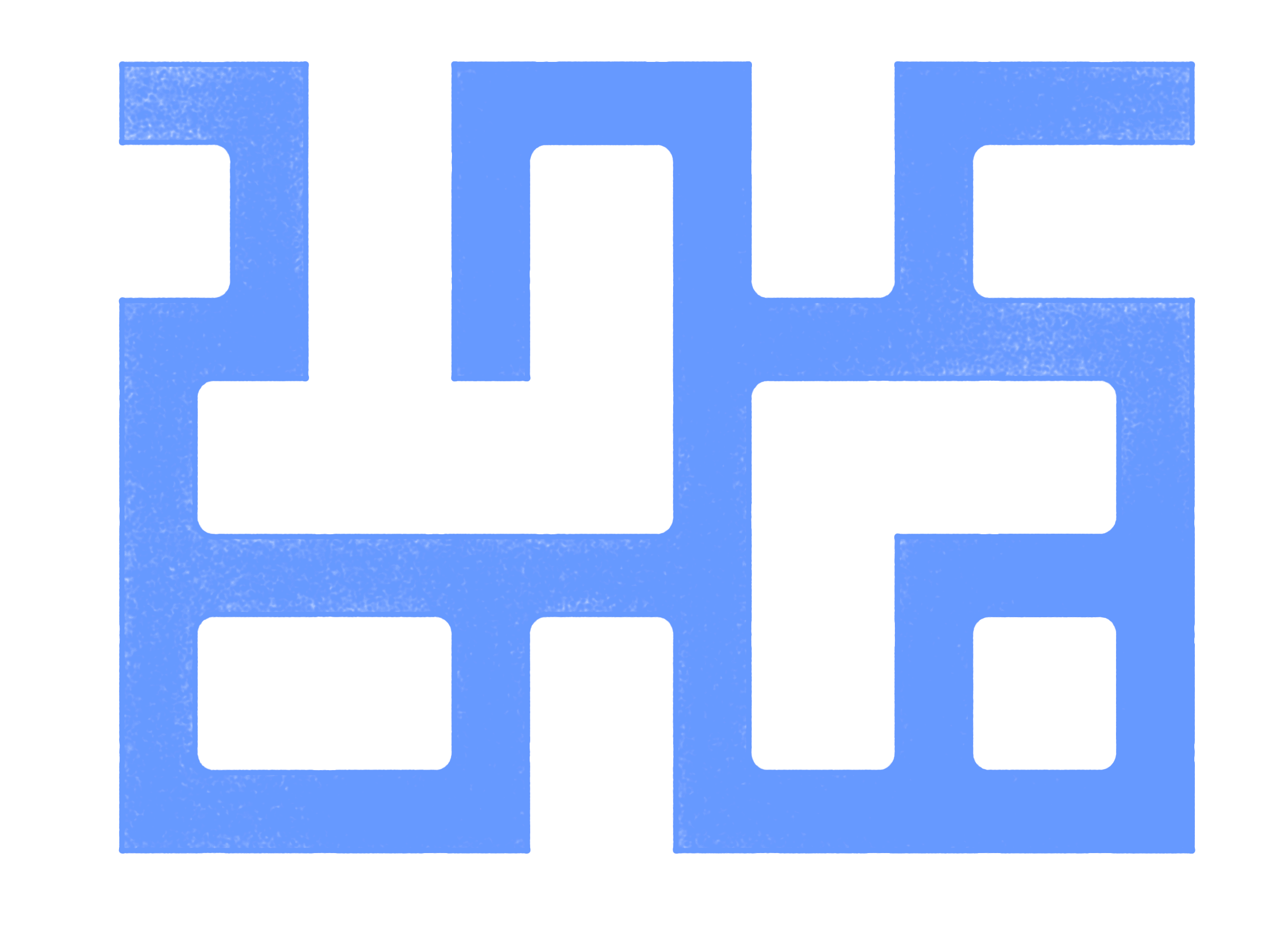}
            \caption{$k=5$}
            \label{}
        \end{subfigure}
        \begin{subfigure}[h]{0.3\textwidth}
            \centering
            \includegraphics[width=0.9\textwidth]{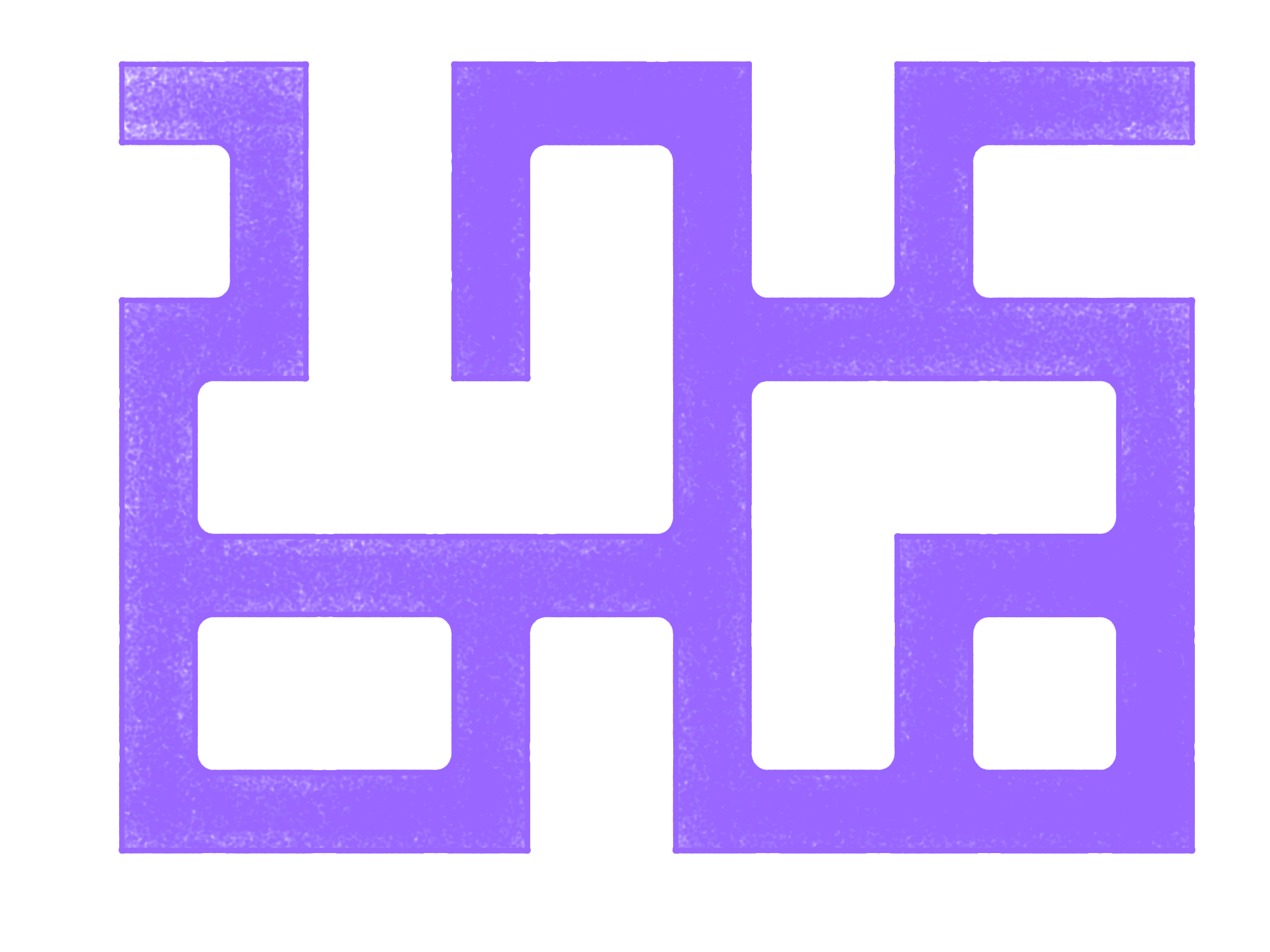}
            \caption{$k=10$}
            \label{}
        \end{subfigure}
        \begin{subfigure}[h]{0.3\textwidth}
            \centering
            \includegraphics[width=0.9\textwidth]{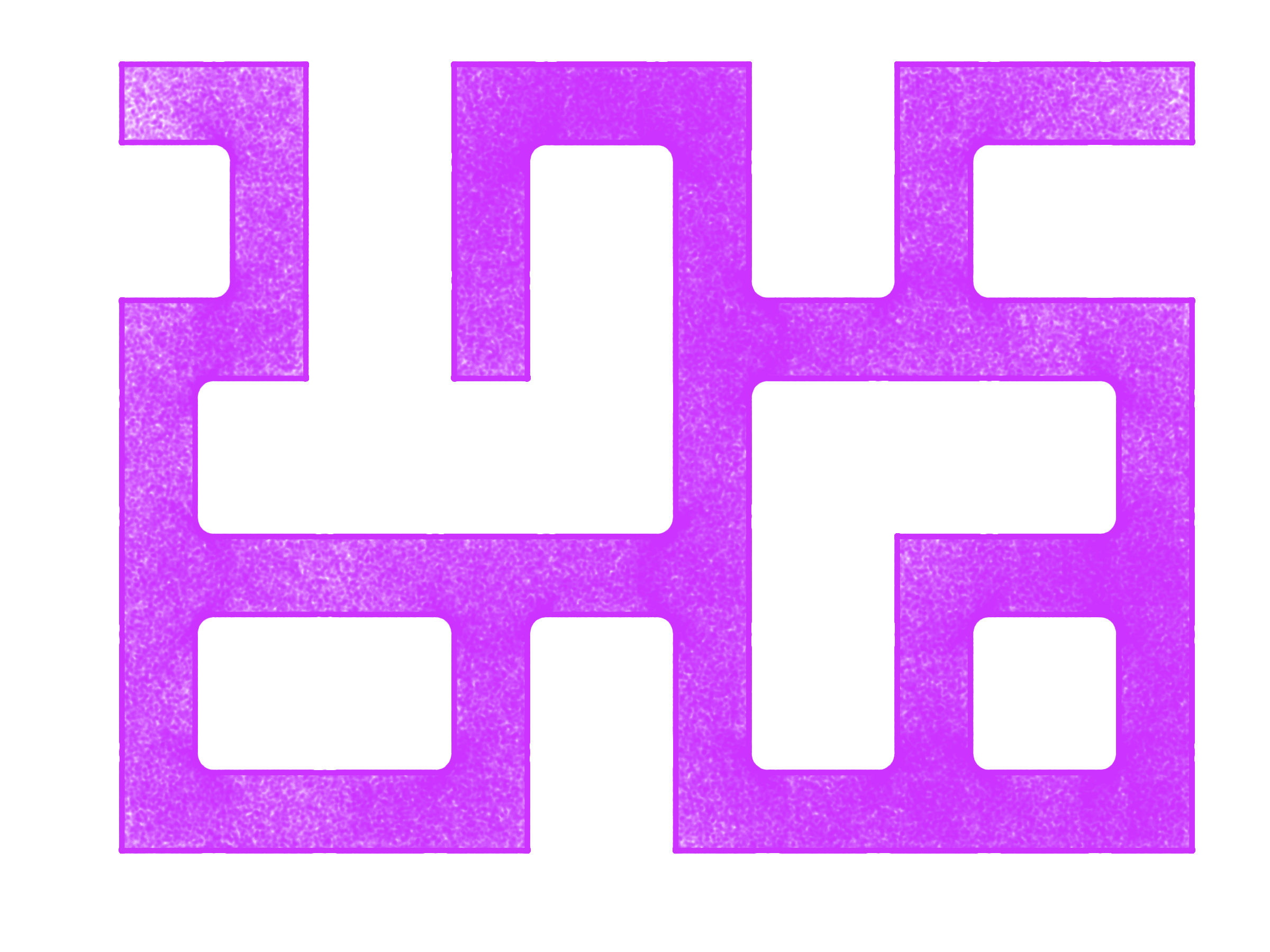}
            \caption{$k=20$}
            \label{}
        \end{subfigure}
        \begin{subfigure}[h]{0.3\textwidth}
            \centering
            \includegraphics[width=0.9\textwidth]{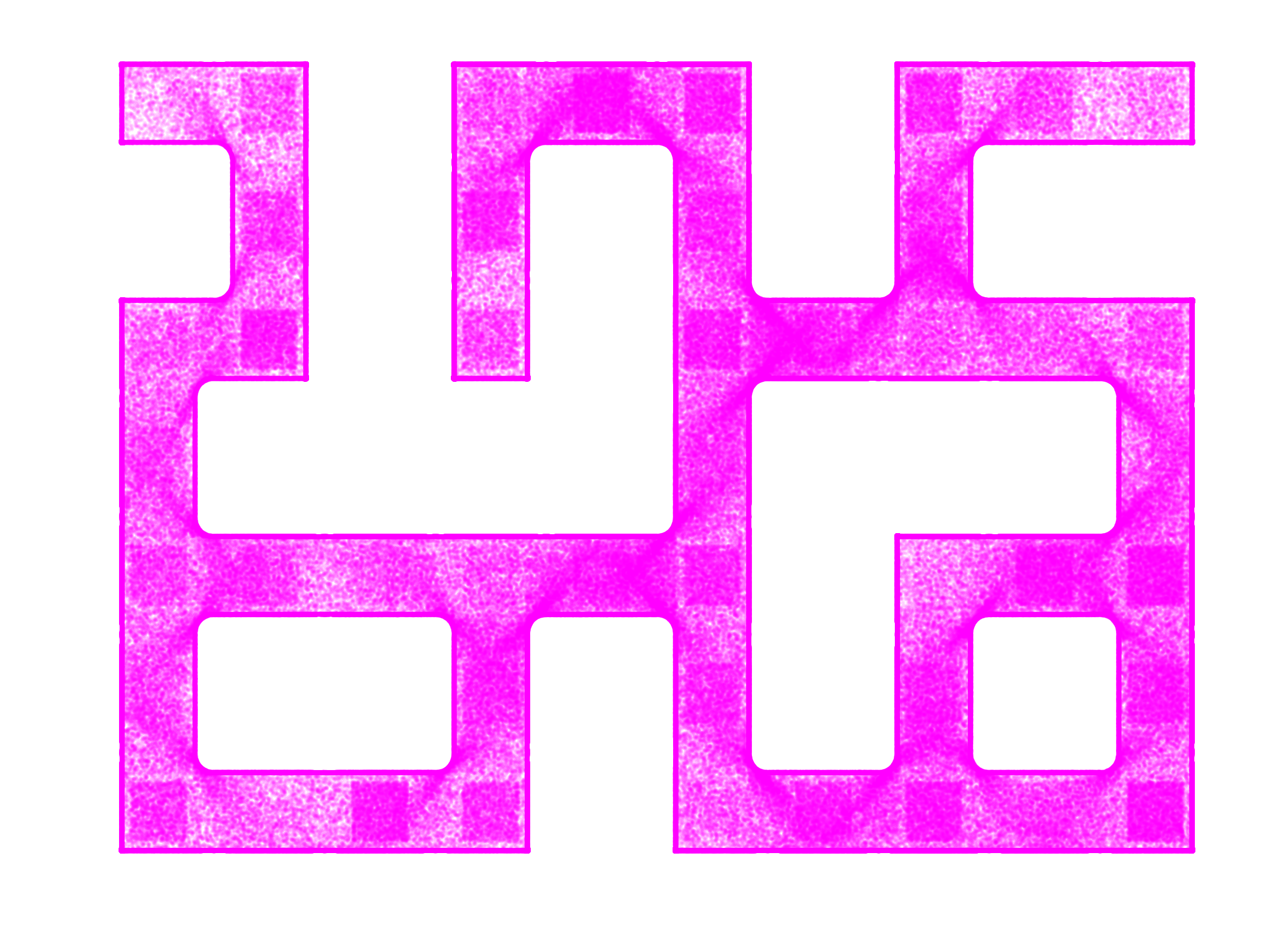}
            \caption{$k=50$}
            \label{}
        \end{subfigure}
    \caption{\textbf{Scatter plot of visited states during training.} Increasing the action repetition does not significantly affect the coverage of the space in the LargeMaze environment. Similar coverage pattern holds Four-Rooms and Maze environments. (omitted for brevity)}
    \label{fig:coverage}
\end{figure}

\newpage
\section{Compute Resources}
\label{sec:compute}
Each experiment (1M training steps) was conducted using a single GPU (NVIDIA GeForce RTX 2080 Ti), taking an average of 12 hours per experiment when training with state observations and an average of 18 hours when using image observations. Each experiment used 1.5GB of RAM when trained using state observations and 13GB of RAM when trained with image observations of size (64$\times$64$\times$3) pixels. 
\newpage

\end{document}